\newcommand{\be}{\begin{eqnarray} \begin{aligned}}
\newcommand{\ee}{\end{aligned} \end{eqnarray} }
\newcommand{\benn}{\begin{eqnarray*} \begin{aligned}}
\newcommand{\eenn}{\end{aligned} \end{eqnarray*} }
\newcommand{\vx}{x}
\newcommand{\vz}{z}
\newcommand{\x}{x}
\newcommand{\z}{z}
\newcommand{\px}{{p_{\theta}(\vx)}}
\newcommand{\iq}{{I_{q}(X;Z)}}
\DeclareMathOperator*{\argmax}{arg\,max}
\DeclareMathOperator*{\argmin}{arg\,min}
\newcommand{\scriptveryshortarrow}[1][3pt]{{%
    \hbox{\rule[\scriptratio\dimexpr\fontdimen22\textfont2-.2pt\relax]
               {\scriptratio\dimexpr#1\relax}{\scriptratio\dimexpr.4pt\relax}}%
   \mkern-4mu\hbox{\let\f@size\sf@size\usefont{U}{lasy}{m}{n}\symbol{41}}}}
\newacronym{AIS}{ais}{annealed importance sampling}
\newacronym{BQ}{bq}{Bayesian Quadrature}
\newacronym{BA}{ba}{Blahut-Arimoto}
\newacronym{AUC}{auc}{area under the curve}
\newacronym{BAR}{bar}{Bennett's Acceptance Ratio}
\newacronym{JS}{js}{Jensen-Shannon}
\newacronym{CFT}{cft}{Crooks's Fluctuation Theorem}
\newacronym{ELBO}{elbo}{evidence lower bound}
\newacronym{EUBO}{eubo}{evidence upper bound}
\newacronym{HMC}{hmc}{Hamiltonian Monte Carlo}
\newacronym{IB}{ib}{Information Bottleneck}
\newacronym{IS}{is}{importance sampling}
\newacronym{IWAE}{iwae}{importance weighted autoencoder}
\newacronym{KL}{kl}{Kullback-Leibler}
\newacronym{MCMC}{mcmc}{Markov Chain Monte Carlo}
\newacronym{RD}{rd}{rate-distortion}
\newacronym{RWS}{rws}{reweighted wake-sleep}
\newacronym{SGD}{sgd}{stochastic gradient descent}
\newacronym{SNIS}{snis}{self-normalized importance sampling}
\newacronym{TI}{ti}{thermodynamic integration}
\newacronym{TVI}{tvi}{thermodynamic variational inference}
\newacronym{TVO}{tvo}{thermodynamic variational objective}
\newacronym{VAE}{vae}{variational autoencoder}
\newacronym{VI}{vi}{variational inference}
\newacronym{VIMCO}{vimco}{variational inference for Monte Carlo objectives}
\newacronym{WS}{ws}{wake-sleep}
\newcommand{\kld}{\textsc{kl} }
\def\KL{\mathrm{KL}}
\setlist[itemize]{leftmargin=*}
\begin{document}

% Theorem environment commands
\makeatletter
\def\thm@space@setup{%
  \thm@preskip=\parskip \thm@postskip=0pt
}
\makeatother

\newtheorem{theorem}{Theorem}[section]
\newtheorem*{theorem*}{Theorem}
\newtheorem{lemma}[theorem]{Lemma}
\newtheorem{proposition}[theorem]{Proposition}
\newtheorem{corollary}[theorem]{Corollary}

\newenvironment{definition}[1][Definition]{\begin{trivlist}
\item[\hskip \labelsep {\bfseries #1}]}{\end{trivlist}}
\newenvironment{example}[1][Example]{\begin{trivlist}
\item[\hskip \labelsep {\bfseries #1}]}{\end{trivlist}}
\newenvironment{remark}[1][Remark]{\begin{trivlist}
\item[\hskip \labelsep {\bfseries #1}]}{\end{trivlist}}
\newenvironment{theoremun}[1][Theorem]{\begin{trivlist}
\item[\hskip \labelsep {\bfseries #1}]}{\end{trivlist}}
 \setlength{\tabcolsep}{4pt}

%\icmltitlerunning{$\alpha$ Rate-Distortion}

%\title{Legendre Duality, Hypothesis Testing, and Rate-Distortion}
\title{Likelihood Ratio Exponential Families}
\author{
    Rob Brekelmans \\ %, Aram Galstyan, Greg Ver Steeg \\
    USC Information Sciences Institute \\
    %University of Southern California\\
    Marina del Rey, CA, USA \\
    \texttt{brekelma@usc.edu} 
    % \And
    % Vaden Masrani \\
    % University of British Columbia \\
    % Vancouver, CA \\
    % \texttt{vmasrani@cs.ubc.edu} 
    % \And
    % Alireza Makhzani \\
    % Vector Institute\\
    % %University of Toronto\\
    % Toronto, CA \\
    % \texttt{makhzani@vectorinstitute.ai}
    % \And
    % Greg Ver Steeg \\
    % USC Information Sciences Institute\\
    % %University of Southern California \\
    % Marina del Rey, CA, USA \\
    % \texttt{gregv@isi.edu}\\
    \And
    Frank Nielsen \\
    Sony CSL \\
     %Sony CSL, Ecole Polytechnique \\
    Tokyo, Japan \\
    \texttt{frank.nielsen@acm.org}
    %     \And
    % Alireza Makhzani \\
    % University of Toronto \\
    %  %Sony CSL, Ecole Polytechnique \\
    % Vector Institute \\
    % \texttt{frank.nielsen@acm.org}
        \And
    Aram Galstyan \\ %, \, Greg Ver Steeg \\
    USC Information Sciences Institute\\
    %University of Southern California \\
    Marina del Rey, CA, USA \\
    \texttt{galstyan@isi.edu}\\
    \And
    Greg Ver Steeg \\
    USC Information Sciences Institute\\
    %University of Southern California \\
    Marina del Rey, CA, USA \\
    \texttt{gregv@isi.edu}
    }
    
\author{
Rob Brekelmans$^1$,
Frank Nielsen$^2$,
Alireza Makhzani$^3$\\
\bf Aram Galstyan$^1$,
\bf Greg Ver Steeg$^1$ \\
$^1$USC Information Sciences Institute, $^2$Sony CSL, Tokyo \\ $^3$University of Toronto, Vector Institute \\
{\tt \{brekelma,galstyan,gregv\}@isi.edu} \\
{\tt makzhani@vectorinstitute.ai},  {\tt  frank.nielsen@acm.org }
}    
% \begin{document}
% {\let\newpage\relax\maketitle}
% %\maketitle 
% %\twocolumn{
% \input{sections/q}
% \input{sections/pf_q_min_div}
% \clearpage
%\input{sections/one_pager}
% %}
%
\maketitle 

\begin{abstract}
The exponential family is well known in machine learning and statistical physics as the maximum entropy distribution subject to a set of observed constraints \cite{jaynes1957information}, while the geometric mixture path is common in \textsc{mcmc} methods such as \gls{AIS} \cite{neal2001annealed}.   Linking these two ideas, recent work \cite{brekelmans2020all} has interpreted the geometric mixture path as an exponential family of distributions to analyse the \gls{TVO} \cite{masrani2019thermodynamic}.  

We extend these \textit{likelihood ratio exponential families} to include solutions to \gls{RD} optimization \cite{alemi2018fixing, cover2012elements}, the \gls{IB} method \cite{tishby1999information}, and recent rate-distortion-classification (\textsc{rdc}) approaches which combine \gls{RD} and \gls{IB} \cite{gao2020free, alemi2018therml}.  This  provides a common mathematical framework for understanding these methods via the conjugate duality of exponential families and hypothesis testing.  Further, we collect existing results \cite{Banerjee2005, grosse2013annealing, nielsen2013information, borade2006projection} to provide a variational representation of intermediate \gls{RD} or \gls{TVO} distributions as a minimizing an expectation of \textsc{kl} divergences.  This solution also corresponds to a size-power tradeoff using the likelihood ratio test and the Neyman Pearson lemma.   In \gls{TI} bounds \cite{ogata1989monte, gelman1998simulating} such as the \gls{TVO}, we identify the intermediate distribution whose expected sufficient statistics match the log partition function.
\end{abstract}

\section{Introduction}\label{sec:lr_families}
%\section{Likelihood Ratio Exponential Families}\label{sec:lr_families}
\paragraph{Likelihood Ratio Exponential Family} Following \citet{grunwald2007minimum} Ch. 19, or \citet{brekelmans2020all},  we consider the geometric mixture path between a base $\pi_0(\vz)$, and a target $\pi_1(\vz)$ or posterior $\pi_1(\vz|\vx)$, as an exponential family of distributions.
%In particular, we assume that the unnormalized target factorizes as $\tilde{\pi}_1(\vz) = \pi_0(\vz) \pi_1{a}(\vx|)$
We define the sufficient statistics  as the $\log$ likelihood ratio $\phi(\vz) = \log \pi_1(\vz) / \pi_0(\vz)$ \cite{brekelmans2020all}, although in practice it is convenient to consider unnormalized distributions such as $\pi_1(\vz) \propto \tilde{\pi}_1(\vz)$ or $\pi_1(\vz|\vx) \propto \tilde{\pi}_1(\vx,\vz)$ and adjust the normalization constant accordingly. 
Using a natural parameter $\beta$ and base measure $\pi_0(\vz)$, 
%we define the \textit{likelihood ratio exponential family}
%We can write this family $\pi_{\beta}$ using the natural parameter $\beta$ and base distribution $\pi_0$ as follows, 
\begin{align} %-d\textbf{}(\vx, \vz)
    \pi_{\beta}(\vz)&= \frac{1}{Z_{\beta}} \tilde{\pi}_0(\vz)^{1-\beta} \tilde{\pi}_1(\vz)^{\beta} \\[1.15ex]
    &= \tilde{\pi}_0(\vz) \exp \bigg\{ \, \beta  \cdot  \phi(\vz) \,   - \psi( \beta) \bigg\} \label{eq:lkd_ratio_fam} \\[1.15ex]
    \phantom{\pi_{\beta}} \text{where } \, \, \phi(\vz)   := \log \frac{\tilde{\pi}_1(\vz)}{\tilde{\pi}_0(\vz)} & \quad \quad  \psi(\beta) := \log Z_{\beta} =  \log \int \tilde{\pi}_0(\vz)^{1-\beta} \tilde{\pi}_1(\vz)^{\beta} d\vz \, . \label{eq:path_exp_family_general}
\end{align}

% \begin{align} %-d\textbf{}(\vx, \vz)
%     \pi_{\beta}(\vz)  = \tilde{\pi}_0(\vz) \exp \{ \beta  \cdot  \phi(\vz) \,   - \psi( \beta) \} = &\frac{1}{Z_{\beta}} \tilde{\pi}_0(\vz)^{1-\beta} \tilde{\pi}_1(\vz)^{\beta} \label{eq:lkd_ratio_fam}  \\
%     \phantom{\pi_{\beta}} \text{where } \, \, \phi(\vz)   := \log \frac{\tilde{\pi}_1(\vz)}{\tilde{\pi}_0(\vz)} \quad \quad \psi(\beta) := \log Z_{\beta} = & \log \int \tilde{\pi}_0(\vz)^{1-\beta} \tilde{\pi}_1(\vz)^{\beta} d\vz \, . \label{eq:path_exp_family_general}
% \end{align}
Before discussing examples in Sec. \ref{sec:examples}, we review background on conjugate duality in exponential families, which yields insights which are not evident from writing \cref{eq:lkd_ratio_fam} as a geometric mixture \cite{brekelmans2020all}.
\paragraph{Legendre Duality in Exponential Families}
Since the log partition function $\psi(\beta)$ of an exponential family is a strictly convex, analytic function of the natural parameters $\beta$, its gradient will be unique and may be used as a dual parameterization for $\pi_{\beta}$ \citep{wainwrightjordan}.  This diffeomorphism between the natural parameters $\beta = \{ \beta_j \}$ \footnote{We allow for multiple sufficient statistics, with $\beta \cdot \phi(\vz) = \sum_j \beta_j \cdot \phi_j(\vz)$ denoting the dot product.} and moment parameters, denoted $\eta = \{ \eta_j \}$, also defines the convex conjugate function $\psi^{*}(\eta)$, with %We discuss an example of how to construct a likelihood ratio family with multiple sufficient statistics in Sec. \eqref{sec:rdc}.
\begin{align}
\psi^{*}(\eta) = \sup_{\beta} \beta \cdot \eta - \psi(\beta) \qquad \implies \quad \eta_j &= \frac{\partial \psi}{\partial \beta_j}= \mathbb{E}_{\pi_{\beta}} [ \phi_j(\z)] \, \, \forall \, \, j \,. \label{eq:conjugate} 
% \eta_j (\beta) = \frac{\partial \psi}{\partial \beta_j} = \mathbb{E}_{\pi_{\beta}} [ \phi_j(\x, \z)] \qquad \forall 1\leq j\leq M \label{eq:logpartition_deriv} \, .
\end{align}
Using the Lebesgue or counting measure as $\pi_0(\vz)$, the conjugate $\psi^{*}(\eta)$ corresponds to the negative entropy of the maximum entropy solution $\pi_{\beta}(\vz)$ with observable constraint $\eta$ \citep{ wainwrightjordan, amari2016information}.  With a general base measure (see App. A), we have 
\begin{align}
\psi^{*}(\eta_{\beta}) = D_{\KL}[\pi_{\beta}(\z)||\pi_0(\z)] \label{eq:conjugate_kl}
\end{align}
Since the convex conjugate is an involution  $(\psi^{*})^{*} = \psi$ by the Moreau biconjugation theorem \cite{borwein2010convex}, we can obtain a similar  optimization for $\psi(\beta) = \sup_{\eta} \beta \cdot \eta - \psi^{*}(\eta)$.  This leads to the canonical expression for Legendre duality, when the two optimizations are in equilibrium, and $\beta$ and $\eta_{\beta}$ are in correspondence (see App. \ref{app:canonical})
\begin{align}
 \psi(\beta) + \psi^{*}(\eta_{\beta})  - \beta \cdot \eta_{\beta} = 0 \, . \label{eq:legendre}
\end{align}
% Finally, we can construct Bregman divergences from the convex functions $\psi(\beta)$ or $\psi^{*}(\eta)$.   Using \eqref{eq:path_exp_family_general} and \eqref{eq:legendre}, $D_{\psi}[\beta : \beta^{\prime}] := \psi(\beta) - \psi(\beta^{\prime}) - \langle \beta - \beta^{\prime} , \nabla \psi(\beta^{\prime}) \rangle = D_{\psi^{*}}[ \eta_{\beta^{\prime}}: \eta_{\beta} ]= D_{\KL}[ \pi_{\beta^{\prime}} || \pi_{\beta} ]$ \cite{nielsen2009statistical}.

%$D_{\psi}[\beta : \beta^{\prime}] = \psi(\beta) - \psi(\beta^{\prime}) - \langle \beta - \beta^{\prime} , \nabla \psi(\beta^{\prime}) \rangle $.

Using any convex function, we can obtain a Bregman divergence via the first order Taylor remainder.  For example, using $\psi(\beta)$ or $\psi^{*}(\eta)$, we have
\begin{equation*}
\begin{adjustbox}{max width=\textwidth}
%\begin{aligned}
  $D_{\psi}[\beta : \beta^{\prime}] = \psi(\beta) - \psi(\beta^{\prime}) - \langle \beta - \beta^{\prime}, \nabla \psi(\beta^{\prime}) \rangle \, \quad \,
%\end{aligned}\hspace*{.5cm}
%\begin{aligned}
  D_{\psi^{*}}[\eta : \eta^{\prime}] = \psi^{*}(\eta) - \psi^{*}(\eta^{\prime}) - \langle \eta - \eta^{\prime}, \nabla \psi^{*}(\eta^{\prime}) \rangle \, .$
%\end{aligned}
\end{adjustbox}
\end{equation*}
% In App. \ref{app:breg_kl}, we show that the Bregman divergences $D_{\psi}$, $D_{\psi^{*}$ associated with an exponential family correspond to the KL divergence
% \begin{align}
%      D_{\psi}[\beta : \beta^{\prime}] = D_{\KL}[ \pi_{\beta^{\prime}} || \pi_{\beta} ] = D_{\psi^{*}}[ \eta_{\beta^{\prime}}: \eta_{\beta} ] \, . \label{eq:divergences}
% \end{align}

% Applying the appropriate conjugacy relationships, for example $\psi^{*}(\eta_{\beta^{'}}) = \beta^{\prime} \cdot \eta_{\beta^{'}} - \psi(\beta^{\prime})$, 
% %or $\psi(\eta_{\beta}) = \beta \cdot \eta_{\beta} - \psi^{*}(\eta_{\beta})$, to 
% we can write the divergences, $D_{\psi}$ or $D_{\psi^{*}}$, in their canonical form

%Using the duality relationship \eqref{eq:legendre} and definition of the exponential family, 
With derivations in App. \ref{app:breg_kl}, we can see that the Bregman divergences $D_{\psi}, D_{\psi^{*}}$ are equivalent with the order of the arguments reversed, and correspond to a KL divergence % \cite{nielsen2009statistical}
\begin{align}
     D_{\psi}[\beta : \beta^{\prime}] = D_{\KL}[ \pi_{\beta^{\prime}} || \pi_{\beta} ] = D_{\psi^{*}}[ \eta_{\beta^{\prime}}: \eta_{\beta} ] \, . \label{eq:divergences}
\end{align}

\section{Examples}\label{sec:examples}
%We present several instances of the likelihood ratio exponential family from the generative modeling literature, primarily reframing existing methods in terms of Legendre duality.  

%While this may suggest future insights, we emphasize 
%Since these differ primarily in the choice of initial and target distributions, much of our analysis naturally translates across examples.  
%from the generative modeling literature 
% Summarized in Table \ref{tab:examples},

\paragraph{Thermodynamic Variational Objective}
In the \gls{VAE} setting, the \gls{TVO} \cite{masrani2019thermodynamic, brekelmans2020all} uses the
%The \gls{TVO} \cite{masrani2019thermodynamic, brekelmans2020all} considers the \gls{VAE} setting, using the
approximate posterior as the initial distribution $\tilde{\pi}_0 = q(\vz|\vx)$ and joint generative model as the unnormalized target $\tilde{\pi}_1 = p_{\theta}(\vx,\vz) \propto p_{\theta}(\vz|\vx)$.  
\begin{align}
    \pi_{\beta}(\vz|\vx) &=   q(\vz|\vx) \, \exp \bigg\{ \beta \cdot \log \frac{p_{\theta}(\vx,\vz)}{q(\vz|\vx)} - \psi(x; \beta)  \bigg\} \\
    &= \frac{1}{Z_{\beta}(\vx)} q(\vz|\vx)^{1-\beta} \,  p_{\theta}(\vx|\vz)^{\beta} \label{eq:q_star_rd}
\end{align}
with $\phi(\z) = \log \tilde{\pi}_1 /\tilde{\pi}_0 $.
\citet{masrani2019thermodynamic} use thermodynamic integration (\gls{TI}) \cite{ogata1989monte, gelman1998simulating} to express $\psi(\vx;1) = \log Z_{1}(\vx) = \log \px$ as an integral over the geometric path, %\eqref{eq:path_exp_family_general}, % mixture
\begin{align} %\limits_0^1
    \log Z_1(\vx) - \log Z_0(\vx) = \int_0^1  \frac{d}{d\beta} \log Z_{\beta} \, d\beta = \int_0^1 \mathbb{E}_{\pi_{\beta}} \big[ \phi(\vx, \vz) \big] \, d\beta \, . \label{eq:tvi_integral}
\end{align}
where we use the fact that the (partial) derivative of the log partition function equals the expected sufficient statistics in any exponential family \cite{wainwrightjordan}.  Since $\psi(\vx;\beta)$ is convex in $\beta$ for any $\vx$, the left- and right-Riemann sums will provide lower and upper bounds on the log marginal likelihood,
\begin{align}
   \hspace*{-.2cm} \sum \limits_{t=0}^{T-1} (\beta_{t+1} - \beta_{t}) \cdot \mathbb{E}_{\pi_{\beta_{t}}} \bigg[ \log \frac{\tilde{\pi}_1(\vx, \vz)}{\tilde{\pi}_0(\vz)} \bigg] \, \leq \, \log Z_1 \, \leq \,\sum \limits_{t=0}^{T-1} (\beta_{t+1} - \beta_{t}) \cdot \mathbb{E}_{\pi_{\beta_{t+1}}} \bigg[ \log \frac{\tilde{\pi}_1(\vx, \vz)}{\tilde{\pi}_0(\vz)} \bigg] %\label{eq:tvo_lb_ub3} \, ,
    % \log \frac{1}{K} \sum \limits_{k=1}^K \wkt := \mathcal{L}^{\gls{TVI}}_{\gls{IWAE}_{\textsc{lb}}}\label{eq:ti_iwae_lb}
\end{align}
% \begin{align}
%   \hspace*{-.2cm} \sum \limits_{t=0}^{T-1} (\beta_{t+1} - \beta_{t}) \cdot \mathbb{E}_{\pi_{\beta_{t}}} \big[ \log \frac{p_{\theta}(\vx,\vz)}{q(\vz|\vx)} \big] \, \leq \, \log Z_1 \, \leq \,\sum \limits_{t=0}^{T-1} (\beta_{t+1} - \beta_{t}) \cdot \mathbb{E}_{\pi_{\beta_{t+1}}} \big[ \log \frac{p_{\theta}(\vx,\vz)}{q(\vz|\vx)} \big] \label{eq:tvo_lb_ub2} \, .
%     % \log \frac{1}{K} \sum \limits_{k=1}^K \wkt := \mathcal{L}^{\gls{TVI}}_{\gls{IWAE}_{\textsc{lb}}}\label{eq:ti_iwae_lb}
% \end{align}
% \begin{align}
%   \sum \limits_{t=0}^{T-1} (\beta_{t+1} - \beta_{t}) \cdot \mathbb{E}_{\pi_{\beta_{t}}} \big[ \log \frac{\tilde{\pi}_1(\vx, \vz)}{\pi_0(\vz)} \big] \, \leq \, \log Z_1 \, \leq \,\sum \limits_{t=0}^{T-1} (\beta_{t+1} - \beta_{t}) \cdot \mathbb{E}_{\pi_{\beta_{t+1}}} \big[ \log \frac{\tilde{\pi}_1(\vx, \vz)}{\pi_0(\vz)} \big] \label{eq:tvo_lb_ub2} \, .
%     % \log \frac{1}{K} \sum \limits_{k=1}^K \wkt := \mathcal{L}^{\gls{TVI}}_{\gls{IWAE}_{\textsc{lb}}}\label{eq:ti_iwae_lb}
% \end{align}
 We derive novel insights on \gls{TVO} curve via hypothesis testing in Sec. \ref{sec:vrep}. Note that \gls{TI} bounds as in \eqref{eq:tvo_lb_ub2} may be constructed for any one-dimensional likelihood ratio  family with $\phi(z) = \log \tilde{\pi}_1/\tilde{\pi}_0$, such as in \gls{RD}.  However, more care would be required for multiple sufficient statistics as in \textsc{rdc} \cite{gao2020free, alemi2018therml}. %, including \gls{RD} as discussed below.

% The \gls{TVO} \cite{masrani2019thermodynamic, brekelmans2020all} constructs lower and upper bounds on the log marginal likelihood using \gls{TI} to express $\psi(\vx;1) = \log Z_{1} = \log \px$ as a one dimensional integral.  Noting that the partial derivative of the log partition function equals the expected sufficient statistics for any exponential family \cite{wainwrightjordan},
% \begin{align}
%     \log Z_1 - \log Z_0 = \int \limits_0^1 \frac{d}{d\beta} \log Z_{\beta} d\beta = \int \limits_0^1  \mathbb{E}_{\pi_{\beta}} \big[ \phi(\vz)   \big] \, d\beta \, . \label{eq:tvi_integral}
%     % IWAE ENERGY: \log \frac{1}{K} \sum \limits_{k=1}^K \wkt 
% \end{align} %\log \frac{\tilde{\pi}_1(\vz)}{\pi_0(\vz)}
% Since the log partition function is convex and 

\paragraph{Rate-Distortion}
Rate-distortion (\gls{RD}) optimization (\cite{alemi2018fixing, rose1998deterministic, tishby1999information,  cover2012elements} Ch. 13) formalizes the problem of lossy compression subject to a fidelity constraint.  As in \citet{alemi2018fixing}\cite{alemi2018therml}, we measure the rate using the \kld divergence to a fixed marginal distribution $\pi_0(\vz) = m(\vz)$, which upper bounds the mutual information in general.  The distortion function $d(\vx,\vz)$ measures the quality of a code $\vz$.  \gls{RD} optimization seeks the minimum-rate encoding which achieves a desired average distortion $D$,
\begin{align}
R(D) = \min \limits_{q(\vz|\vx)} D_{\KL}[q(\z|\x) || m(\z)] \quad \text{subj. to} \quad &\mathbb{E}_{q(\z | \x)} [ d(\x|\z) ] \leq D \, . \label{eq:rd_constrained}
%:= \eta_D \label{eq:fep} \\
%& \mathbb{E}_{q(\z| \x)} [ \log c(y|\z)] = -C := \eta_C \nonumber \, .
\end{align}
We restrict our attention to a reconstruction loss distortion $d(\vx,\vz) = -\log p_{\theta}(\vx|\vz)$ as in \cite{alemi2018fixing}.  Introducing $\beta$ to enforce the constraint, we obtain the unconstrained Lagrangian
\begin{align}
    \max_{\beta} \min_{q(\vz|\vx)} D_{\KL}[q(\z|\x) || m(\z)] - \beta \big( \, \mathbb{E}_{q(\z | \x)} [ d(\x,\z) ] - D \, \big) 
\end{align}
whose solution, for a given $m(\vz)$, has an exponential family form with $\phi(\vx,\vz) = -d(\vx,\vz)$ (e.g. \cite{tishby1999information})
\begin{align}
    \pi_{\beta}(\vz|\vx) &=  m(\vz) \, \exp \{-\beta \cdot d(\vx,\vz) - \psi(x; \beta) \} \\
    &= \frac{1}{Z_{\beta}(\vx)} m(\vz) \,  p_{\theta}(\vx|\vz)^{\beta} \label{eq:q_star_rd}
\end{align}
From the likelihood ratio perspective, we can choose $\pi_0(\vz) = m(\vz)$ and $\tilde{\pi}_1(\vx,\vz) = p_{\theta}(\vx|\vz) m(\vz) \propto p_{\theta}(\vz|\vx)$. Absorbing the factor of $\px$ into the normalizer $Z_{\beta}(\vx)$, we obtain the sufficient statistics
\begin{align}
    \phi(\vx,\vz) = \log \frac{\tilde{\pi}_1(\vx, \vz)}{\tilde{\pi}_0(\vz)} = \log \frac{p_{\theta}(\vx|\vz) m(\vz)}{m(\vz)} = \log p_{\theta}(\vx|\vz) = -d(\vx,\vz) \, , \label{eq:phi_d}
\end{align}%$\phi(\vx,\vz) = -d(\vx,\vz) = \log p_{\theta}(\vx|\vz)$
so that the solution in \eqref{eq:q_star_rd} matches $\pi_{\beta}(\vz|\vx)$ in the likelihood ratio family induced by \eqref{eq:phi_d}.    The Lagrange multiplier $\beta$ is chosen to enforce the distortion constraint $D$.  Since $\phi(\vx,\vz) = -d(\vx,\vz)$,  simply translates to seeking  moment parameters such that $\eta_{\beta} = -D$.  At this optimal solution, $R(D)$ matches the conjugate function $\psi^{*}(\eta)$ in \eqref{eq:conjugate_kl}, with
\begin{align}
    R(D) = \psi^{*}(\eta) &=  D_{\KL}[\pi_{\beta}(\vz|\vx) || m(\vz)] \\
    &  = \beta \cdot \eta - \psi(\beta) \label{eq:rd_conjugate0} \\
    & = -\beta \, D - \log Z_{\beta}(\vx) \, . \label{eq:rd_conjugate}
\end{align}
\citet{huangevaluating} use the expression in \eqref{eq:rd_conjugate} to estimate the \gls{RD} curve using \gls{AIS} \cite{neal2001annealed}.   Finally, from the conjugate optimization $\psi(\beta) = \sup_{\eta} \beta \cdot \eta - \psi^{*}(\eta)$, we obtain the familiar interpretation of the Lagrange multiplier as measuring the slope of the rate-distortion curve 
\begin{align}
\beta = \dfrac{d\psi^{*}}{d\eta} = -\dfrac{dR}{dD} \, .
\end{align}

\paragraph{Information Bottleneck and RDC}% and Rate-Distortion-Classification}
When defining `relevant information' via a random variable such as a label $y$, the Information Bottleneck (\gls{IB}) method \cite{tishby1999information, achille, alemi2016deep} simplifies to an \gls{RD} problem with a learned classifier providing the distortion function $ c(y, \z) = -\log p_{\theta}(y|\vz) $ (\cite{tishby1999information} or App.\ref{app:ib_pf}).  
\begin{align}
\min \limits_{q(\vz|\vx)} D_{\KL}[q(\z|\x) || m(\z)] \quad \text{subj. to} \quad & \mathbb{E}_{q(\z| \x)} [ c(y, \z)] \leq C  \label{eq:ib} 
\end{align}
Recent work \cite{gao2020free,alemi2018therml} considers `\textsc{rdc}' optimization using both reconstruction and classification loss, 
%loss, which we denote as `\textsc{rdc}' %, for applications in transfer learning %an optimization which we reference as \gls{RDC}, 
\begin{align}
\min \limits_{q(\vz|\vx)} D_{\KL}[q(\z|\x) || m(\z)] \quad \text{subj. to} \quad &\mathbb{E}_{q(\z | \x)} [ d(\x,\z) ] \leq D \, \, , \, \, \mathbb{E}_{q(\z| \x)} [ c(y, \z)] \leq C  \label{eq:fep1} 
\end{align}
In this case, we may consider two sufficient statistics in our likelihood ratio exponential family.  Similarly to multivariate \gls{IB} \cite{Slonim2006, elidan2002information}, we use an unnormalized target which factorizes as $\tilde{\pi}_1(\vx,y,\vz) = p_{\theta}(\vx|\vz) p_{\theta}(y| \vz)m(\vz)$, and consider the likelihood ratio sufficient statistics
%\begin{equation*}
%\begin{adjustbox}{max width=\textwidth}
\begin{align}
    \phi_d(\vx, \vz) &= \log \frac{\pi_1(\vz|\vx)}{\pi_0(\vz)} = \log \frac{ p_{\theta}(\vx|\vz)}{\px}  \propto \log p_{\theta}(\vx|\vz) \propto -d(\vx, \vz) \\
%     \nonumber %\label{eq:rdc_phi}
% \end{aligned}\hspace*{.5cm}
% \begin{aligned}
      \phi_c(y, \vz) &= \log \frac{\pi_1(\vz|y)}{\pi_0(\vz)} = \log \frac{ p_{\theta}(y| \vz)}{p(y)} \propto \log p_{\theta}(y|\vz) = -c(y, \vz) \nonumber
\end{align}
%\end{adjustbox}
%\end{equation*}
where we again absorb $\px$ and $p(y)$ into the normalization.
%, using $\phi_d(\vx,\vz) = -d(\vx, \vz)$ as in \eqref{eq:phi_d}.
Introducing Lagrange multipliers $\beta = \{\beta_D, \beta_C \}$ to enforce $\eta_{d}(\beta) = -D$, $ \eta_{c}(\beta)=-C$ at optimality, we obtain the solution to \cref{eq:fep1} as a geometric mixture \cite{gao2020free, alemi2018therml} belonging to the likelihood ratio family with $\phi = \{\phi_d, \phi_c\}$ % and  %\eqref{eq:rdc_phi}
\begin{align}
%\begin{adjustbox}{max width=\textwidth}
    \pi_{\beta}(\vz|\vx,y) &=  m(\z) \exp \big\{ \beta_D \cdot \phi_d(\vx,\vz) + \beta_C \cdot \phi_c(y,\vz) - \psi(\x, y; \beta) \big\} 
    \label{eq:rdc_sol} \\
    &= \frac{1}{Z_{\beta}(\x,y)} \, m(\z) \, p_{\theta}(\vx|\vz)^{\beta_D} \,p_{\theta}(y|\vz)^{\beta_C}   \label{eq:solution_ib}
    %m_0(\z) \exp \{ \lambda \cdot \log d(\x|\z) + \gamma \cdot \log c(y|\z) - \psi(\x; \lambda, \gamma)
%\end{adjustbox}
\end{align}
%where $\beta_D, \beta_C$ reflect the optimal set of Lagrange multipliers $\beta$ enforcing $D = -\eta_{d}(\beta)$, $C = -\eta_{c}(\beta)$. %and \eqref{eq:solution_ib} would reduce to \gls{IB} for $\beta_D = 0$.
%match \cite{gao2020free}, and the Lagrange multiplier $\gamma$ enforces the constraint on the classification loss.  
With applications in transfer learning, \citet{gao2020free} seek to evolve the model parameters $\theta$ and approximate posterior $q(\vz|\vx)$ along the `equilibrium surface' of optimal solutions to \cref{eq:fep1}.  We interpret their free energy $F(\beta_D, \beta_C)$ as the negative log partition function $-\psi(\beta_D, \beta_C)$,  where $\beta_D, \beta_C$ are analogous to the \textit{intensive} variables of a physical system \cite{alemi2018therml}.  Written using the conjugate optimization \eqref{eq:conjugate}, we seek $\{\theta$, $q(\vz|\vx)\}$ that yield the appropriate distortion and classification loss %$\eta_D, \eta_C$
\begin{align}
  -F(\beta_D, \beta_C)=  \psi(\beta_D, \beta_C) = \sup_{\eta_d,\eta_c}  \beta_D \, \eta_d + \beta_C \, \eta_c - \psi^{*}(\eta_d, \eta_c)  \label{eq:fe}
\end{align}
Similarly, for given \textit{extensive} variables $\eta_D, \eta_C$, the optimal rate $R(D,C)$ corresponds to $\psi^{*}(\eta_D, \eta_C)$ 
%Similarly, for given moment constraints $\eta_D, \eta_C$, or \textit{extensive variables}, the optimal rate $R(D,C)$ corresponds to the conjugate $ \psi^{*}(\eta_D, \eta_C)$   %This fact ahs  which has not been explicitly observed in \cite{alemi2018therml, gao2020free}. 
\begin{align}
R(D,C) = \psi^{*}(\eta_D, \eta_C) = \sup \limits_{\beta_d, \beta_c } - \beta_d \, D - \beta_c \, C - \psi(\beta_d, \beta_c) \, ,
\label{eq:leg} 
\end{align}
At optimality on the `equilibrium surface' \cite{gao2020free}, we obtain equality in the expression for Legendre duality \eqref{eq:legendre}.
In other words, for the current decoder and classifier parameters $\theta$, the encoder $q(\vz|\vx)$ matches $\pi_{\beta}(\vz|\vx)$ in the likelihood ratio family \eqref{eq:rdc_sol}, with $\beta = \{ \beta_D, \beta_C \}$.  This distribution fulfills the constraints $\eta_{\beta} = \{ \eta_D, \eta_C \} = \{ -D, -C \}$, so that 
% $q(\vz|\vx) = \pi_{\beta}(\vz|\vx)$ in the likelihood ratio family \eqref{eq:rdc_sol}.  This distribution fulfills the constraints $\eta_{\beta} = \{ \eta_D, \eta_C \} = \{ -D, -C \}$ for $\beta = \{ \beta_D, \beta_C \}$ and  the current decoder and classifier parameters $\theta$, and corresponds to equality in the Legendre duality expression \eqref{eq:legendre}
\begin{align}
  \psi(\beta_D, \beta_C) + \psi^{*}(\eta_D, \eta_C) - \beta_D \, \eta_D - \beta_C \, \eta_C = 0 \, . \label{eq:legendre_dc}
\end{align}

%and corresponds to equality in the Legendre duality expression \eqref{eq:legendre}

This expression \eqref{eq:legendre_dc} also translates to the `first law of learning' from \citet{alemi2018therml}, when $\psi(\beta_D, \beta_C)$ is considered a fixed quantity for given a choice of $\beta_D, \beta_C$.% is considered as a fixed quantity.

\section{Variational Representations and Hypothesis Testing}\label{sec:vrep}
\citet{grosse2013annealing} note that any distribution along the geometric mixture path can be given a variational representation as the solution to an expected \kld divergence minimization
\begin{align}
    \pi_{\beta}(\z) &= \argmin \limits_{r(\z)} (1-\beta) \, D_{\KL}[ r(\z) || \pi_{0}(\z)] + \, \beta \, D_{\KL}[  r(\z) || \pi_{1}(\z)] \, \label{eq:vrep1}
\end{align}%either 
We proceed to interpret \cref{eq:vrep1} as a Bregman information (or gap in Jensen's inequality) \cite{Banerjee2005}, and as describing an optimal decision rule for hypothesis testing using the Neyman Pearson lemma.  We restrict our attention to a one-dimensional likelihood ratio family, as in \gls{TVO}, \gls{RD}, or \gls{IB}, in this section.

%\subsection{Bregman Information}
\paragraph{Bregman Information}
\citet{Banerjee2005} define the \textit{Bregman information} as the minimum expected Bregman divergence to a representative point in the second argument.  Regardless of the Bregman generator, the optimal representative corresponds to the mean over the input arguments.  
Since $D_{\KL}[ r_{\beta}(\z) || \pi_{0}(\z)] = D_{\psi}[0 : \beta_r]$ when optimizing over $r_{\beta}(\vz)$ in the exponential family, we can rewrite \cref{eq:vrep1} as
\begin{align}
    \beta &= \argmin \limits_{\beta_r} \, (1-\beta) \, D_{\psi}[ 0 : \beta_r ] + \, \beta \, D_{\psi}[ 1 : \beta_r] \, \,  = \, (1- \beta) \cdot 0 + \beta \cdot 1  \label{eq:vrep_breg} 
    % \pi_{\eta_t}(\z) &= \argmin \limits_{r(\z)} (1-t) \, D_{\KL}[ \pi_{\beta_0}(\z) ||  r(\z)]  + \, t \, D_{\KL}[ \pi_{\beta_1}(\z) || r(\z)] \label{eq:vrep2} \\
    % & \text{where } \eta_t = (1-t) \cdot \eta_0 + t \cdot \eta_1 \nonumber
\end{align}
At this optimum, the expected \kld divergence \eqref{eq:vrep_breg} can be written as a gap in Jensen's inequality for the convex function $\psi(\beta)$ \cite{Banerjee2005},
% At this optimal representative distribution $\pi_{\beta_t}$, the expected \kld divergence \eqref{eq:vrep_breg} can be written as a gap in Jensen's inequality for the convex function $\psi(\beta)$ \cite{Banerjee2005}, or, as shown by  \citet{nielsen2011R\'enyi} or App. \ref{app:R\'enyi}, as a R\'enyi divergence with order $t$
\begin{align}
  \mathcal{J}_{\psi, \{ 1-\beta, \beta \}, \{ 0,1 \} } &= (1-\beta) \, D_{\psi}[ 0 : \beta ] + \, t \, D_{\psi}[ 1 : \beta ] \\
 &= (1-\beta) \, \psi(0) + \beta \, \psi(1) - \psi \big( \beta \big) \label{eq:jg00}  %\nonumber %\label{eq:jg_R\'enyi}
\end{align}
We visualize this gap in Jensen's inequality in \cref{fig:chernoff_logzb}.   \citet{nielsen2019abstract, nielsen2010family} utilize $\mathcal{J}_{\psi}$ to construct additional divergence measures, which \citet{deasy2020constraining} explore in the context of variational autoencoders.
%as alternative information measures

As shown in \cite{nielsen2011renyi} or App. \ref{app:renyi_jg}, we can also view $\mathcal{J}_{\psi}$, or the expected \kld divergence \eqref{eq:vrep1}, as a R\'enyi divergence with order $\beta$
\begin{align}
\mathcal{J}_{\psi, \{ 1-\beta, \beta \}, \{ 0,1 \} }  &= (1-\beta) \, D_{\KL}[ \pi_{\beta}(\z) || \pi_{0}(\z)] + \, \beta \, D_{\KL}[ \pi_{\beta}(\z) || \pi_{1}(\z)]  \\
&= (1-\beta) \, D_{\beta}[\pi_{1}(\z) : \pi_{0}(\z) ] \nonumber \, .
\end{align}
\citet{grunwald2007minimum} and \citet{harremoes2006interpretations} provide additional coding interpretations of the Rényi divergence.  

\paragraph{Neyman Pearson Lemma}
%\subsection{Neyman Pearson Lemma}\label{sec:neyman_pearson}
Suppose we have access to $n$ i.i.d. observations from an unknown distribution $r(\vz)$, and are interested in testing the hypotheses that either $H_0: r(\vz) = \pi_0(\vz)$ or $H_1: r(\vz) = \pi_1(\z)$.  The Neyman-Pearson lemma states that the likelihood ratio test is optimal, in the sense that, for any other decision region with type-1 error $\mathrm{Pr}(e_1) = R$, then the type-2 error is no better than that of the likelihood ratio test (\cite{cover2012elements} Ch. 11, \cite{borade2006projection})  \footnote{While the Neyman-Pearson lemma is often obtained via the discrete method of types \cite{cover2012elements}, \citet{csiszar1998method} gives a derivation for the continuous setting.}.  The decision rule is given by %with threshold $\eta$ 
\begin{align}
A_{n}(\pi_1; \eta)  = \bigg \{ \, z_{1:n} \, \, \bigg | \, \, \, \frac{1}{n} \sum \limits_{i=1}^n \log \frac{\pi_1(z_i)}{\pi_0(z_i)} \geq \eta  \, \bigg \}
\end{align}
for some threshold $\eta$.  Let a type-1 error occur when $n$ i.i.d. draws $\{ z_i \}_{i=1}^{N}$ from $\pi_0(\vz)$ will yield empirical expectations exceeding the threshold $\eta$.  Sanov's Theorem and large deviation theory (\cite{cover2012elements} Ch. 11, \cite{csiszar2004information}) states that the asymptotic error exponent corresponds to a \kld divergence 
\begin{align}
\hspace*{-.2cm}	\lim \limits_{n \rightarrow \infty} \frac{1}{n} \, \mathrm{Pr}(e_1) &\rightarrow \exp \big\{ -D_{\KL}[r^{*}(\z) || \pi_0(\z) ] \big\}  \, , \, \\[1.25ex]
\text{ where  } r^{*}(\vz) &= \min \limits_{r(\vz) \in \mathcal{M}_{\eta}} D_{\KL}[r(\vz)|| \pi_0(\z)] \, \label{eq:sanov0}
\end{align}
%where  Thus, the probability of error is governed by the `closest' distribution to $\pi_0(\vz)$ in \kld divergence within the feasible set of distributions satisfying the expectation constraint $\eta$.
The feasible set $\mathcal{M}_{\eta} := \{r(\vz) \, | \, \mathbb{E}_{r} \log \frac{\pi_1(\vz)}{\pi_0(\vz)} = \eta \}$ reflects a expectation constraint corresponding to a given decision threshold, and the error exponent is obtained by minimizing the divergence subject to this constraint.  With $\psi^{*}(\eta) = D_{\KL}[\pi_{\beta_{\eta}}(\vz)||\pi_0(\vz)]$ as in \cref{eq:conjugate_kl}, this exactly matches the conjugate or maximum entropy optimization for a given expected sufficient statistic $\eta$, and thus $r^{*}(\vz)$ lies within the likelihood ratio exponential family,
\begin{align}
    r^{*}(\vz) = \pi_0(\vz) \exp \big\{ \beta_{\eta} \cdot \log \frac{\pi_1(\vz)}{\pi_0(\vz)} - \psi(\beta) \big\}
\end{align}
As shown in Fig. \ref{fig:sanov}, Sanov's Theorem implies a similar expression for the asymptotic type-2 error, when draws from $\pi_1(\vz)$ achieve a \textit{lower} expected likelihood ratio than $\eta$.  Expressing the conditions of the Neyman Pearson lemma using these asymptotic error probabilities, we can write
%where we can treat $r(\vz)$ as the discrete `type' distribution, corresponding to the proportion $n_i / n$ of $n$ draws taking value $\vz = \vz_i$ \cite{cover2012elements}.  }
%\footnote{Note that the Neyman Pearson lemma is often derived using the method of types, where we can treat $r(\vz)$ as the discrete `type' distribution, corresponding to the proportion $n_i / n$ of $n$ draws taking value $\vz = \vz_i$ \cite{cover2012elements}.  The continuous case is treated by \citet{csiszar1998method}.}, we can write
\begin{align}
    \mathrm{Pr}(e_2) = \min \limits_{r(\vz)} D_{\KL}[r(\vz) || \pi_1(\vz)]  \quad \text{subj. to} \quad  D_{\KL}[r(\vz) || \pi_0(\vz)] = R \, 
\end{align}
Using a Lagrange multiplier $\lambda = \frac{1-\beta}{\beta}$ to enforce the constraint, we obtain the variational form \eqref{eq:vrep1} %to enforce the constraint
\begin{align}
    \frac{1}{\beta} \, \mathrm{Pr}(e_2) =  \min \limits_{r(\vz)} (1-\beta) \, D_{\KL}[r(\vz) || \pi_0(\vz)] + \beta \, D_{\KL}[r(\vz) || \pi_1(\vz)]
\end{align}
Thus, any distribution in our likelihood ratio exponential family corresponds to a likelihood ratio test with decision threshold $\eta$, which is optimal for a type-1 error region of size $\psi^{*}(\eta) = R$.

% \subsection{Sanov's Theorem: } \label{sec:sanov} 
% An alternative view of the exponential family is given by Sanov's Theorem and large deviation theory ( \cite{cover2012elements} Ch. 11, \cite{csiszar1984sanov,csiszar2004information}).  Let $M_{\pi_0, \eta}$ denote the event that i.i.d. draws $\{ z_i \}_{i=1}^{N}$ from $\pi_0(\vz)$ will yield empirical expectations given by $\eta$, with $\frac{1}{N} \sum_i \phi_j(z_i) = \eta_j$ $\forall j$.  Sanov's Theorem states that
% \begin{align}
% \hspace*{-.2cm}	\lim \limits_{n \rightarrow \infty} \frac{1}{n} Pr(M_{\pi_0, \eta}) \rightarrow \exp \{ -\kl[r^{*}(\z) || \pi_0(\z) ] \}  \, \, \text{ where } r^{*}(\vz) = \min \limits_{r(\vz) \in \mathcal{M}_{\eta}} \kl[r(\vz)|| \pi_0(\z)] \label{eq:sanov0}
% \end{align}
% where $\mathcal{M}_{\eta}$ is defined as the feasible set of distributions satisfying the expectation constraints.  

% %Thus, the asymptotic probability  $M_{\pi_0, \eta}$ is governed by $r^{*}(z)$, the `closest' distribution to $\pi_0$ within the feasible set \cite{eguchi2006interpreting, cover2012elements}.   However, we have already seen that the solution will correspond to a distribution in an exponential family as the solution to a \kld divergence minimization subject to a 
% %This solution $r^{*}$ will correspond with a distribution $\pi_{\beta_{\eta}}$ and the supremum of the conjugate optimization \eqref{eq:conjugate}-\eqref{eq:legendre_sol}.

\begin{figure*}[t]
%\begin{wrapfigure}{r}{0.5\textwidth} 
\begin{minipage}{\textwidth}
%\begin{minipage}{.45\textwidth}
% \vspace*{.2cm}
\centering
%\begin{subfigure}{.475\textwidth}
\includegraphics[width=.485\textwidth]{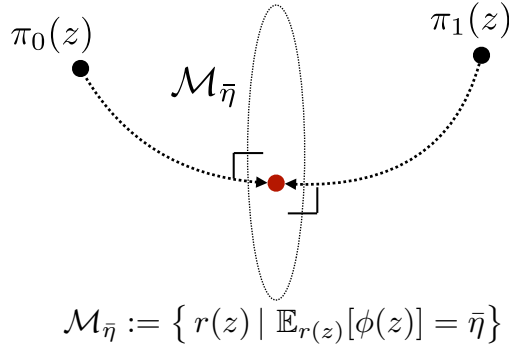}
%\vspace*{.6cm}
%\vspace*{.2cm}
\caption{Sanov's Theorem.  Each exponential arc intersects the manifold $\mathcal{M}_{\bar{\eta}}$ with a right angle, due to a generalized Pythagorean Theorem for the \textsc{kl} divergence  (see Sec. 1.6, 2.8 of \cite{amari2016information}). } \label{fig:sanov} 
%to the \textsc{KL} divergence.}  \label{fig:sanov} 
%:  For a given acceptance threshold $\bar{\eta}$, which defines an optimal tradeoff between size of the type 1 and type 2 error regions in the Neyman-Pearson lemma, we can define a manifold of distributions $\mathcal{M}_{\bar{\eta}}$ satisfying $\mathbb{E}_r[\log \pi_1 / \pi_0] = \bar{\eta}$.   Using Sanov's Theorem, we can see that the minimizers of both $D_{\KL}[r \, ||\pi_1]$ and $D_{\KL}[r \, ||\pi_0]$ for $r \in \mathcal{M}_{\bar{\eta}}$ will coincide, with $r^{*}(\vz|\vx) = \pi_{\beta}(\vz|\vx) \propto \pi_0^{1-\beta} \, \pi_1^{\beta}$ for $\beta$ such that $\eta_{\beta} = \mathbb{E}_{\pi_{\beta}}[\phi] = \bar{\eta}$.
\end{minipage}
\vspace{-10pt}
\end{figure*}
%\end{wrapfigure}

\begin{figure*}[t]
\begin{minipage}{.525\textwidth}
%\begin{minipage}{.55\textwidth}
\vspace*{-.2cm}
\centering
\includegraphics[width=.995\textwidth, trim=0 0 0 0, clip]{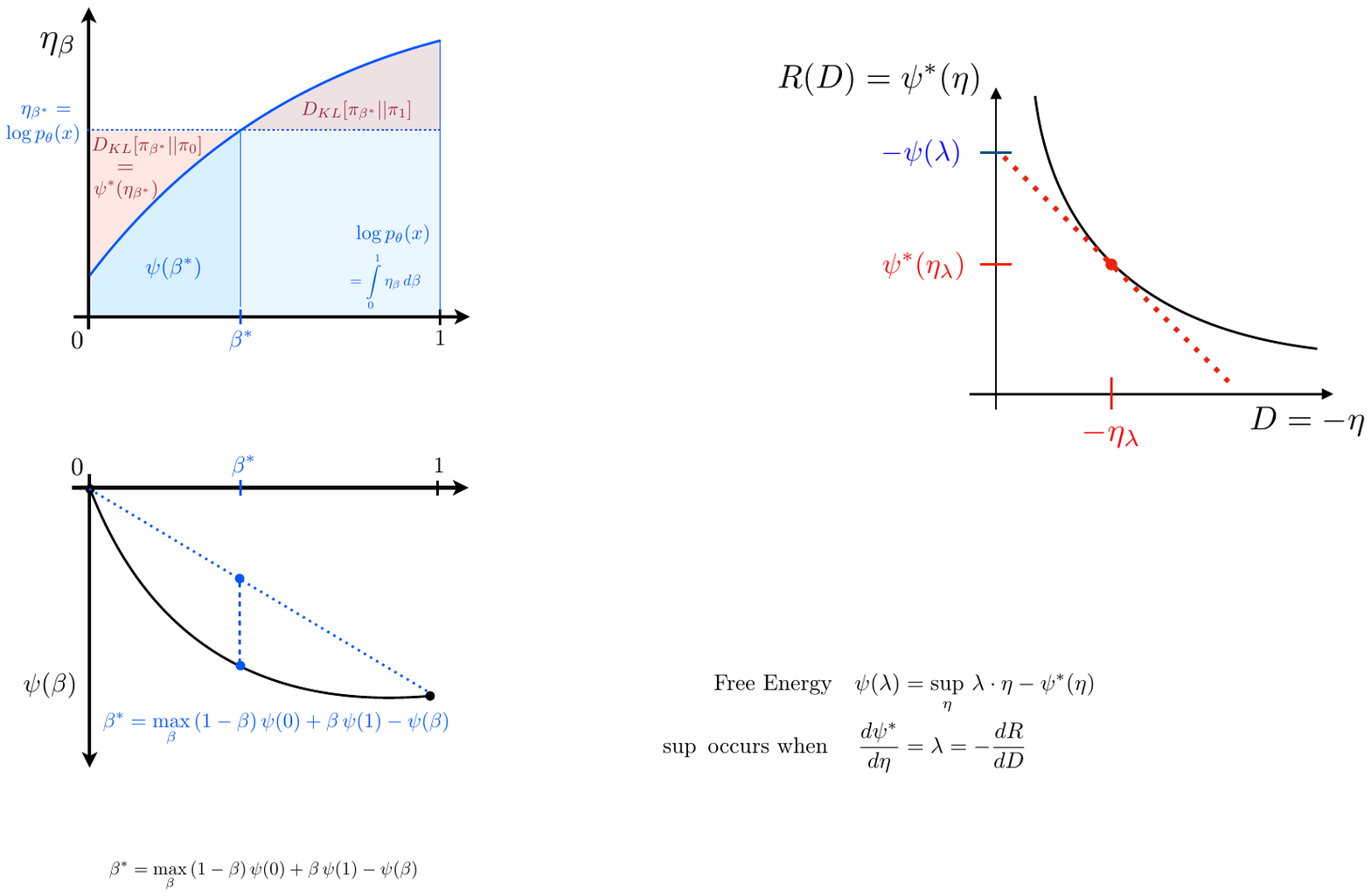}
\vspace*{.05cm}
\caption{Chernoff point on $\eta_{\beta} = \nabla \psi(\beta)$.} \label{fig:chernoff}
\end{minipage}\hspace*{.01\textwidth}
\begin{minipage}{.46\textwidth}
%\vspace*{-.075cm}
%\hspace*{-.11\textwidth}
%\centering
\includegraphics[width=1\textwidth, trim=2 0 0 0, clip]{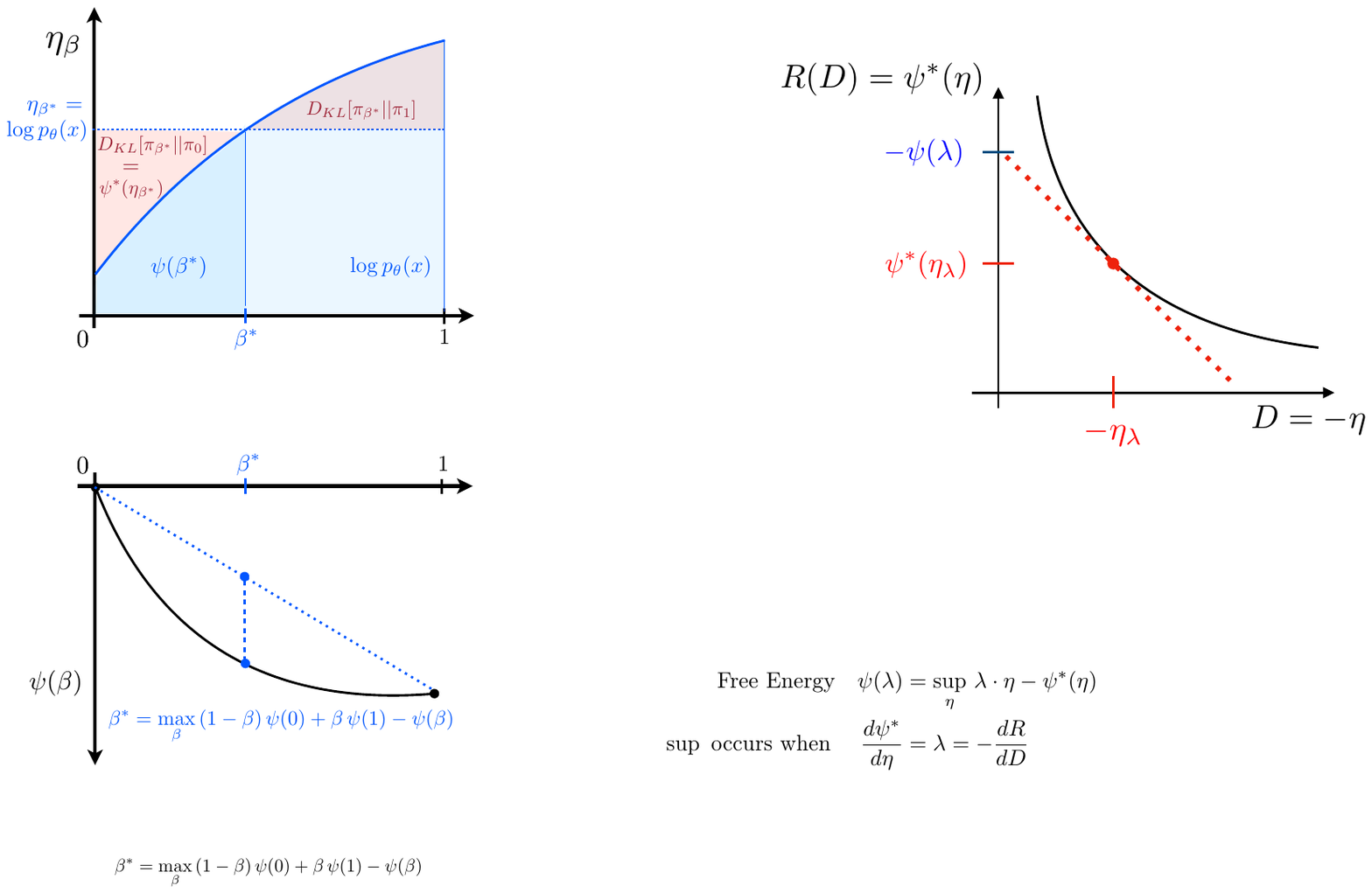}
%\includegraphics[width=\textwidth]{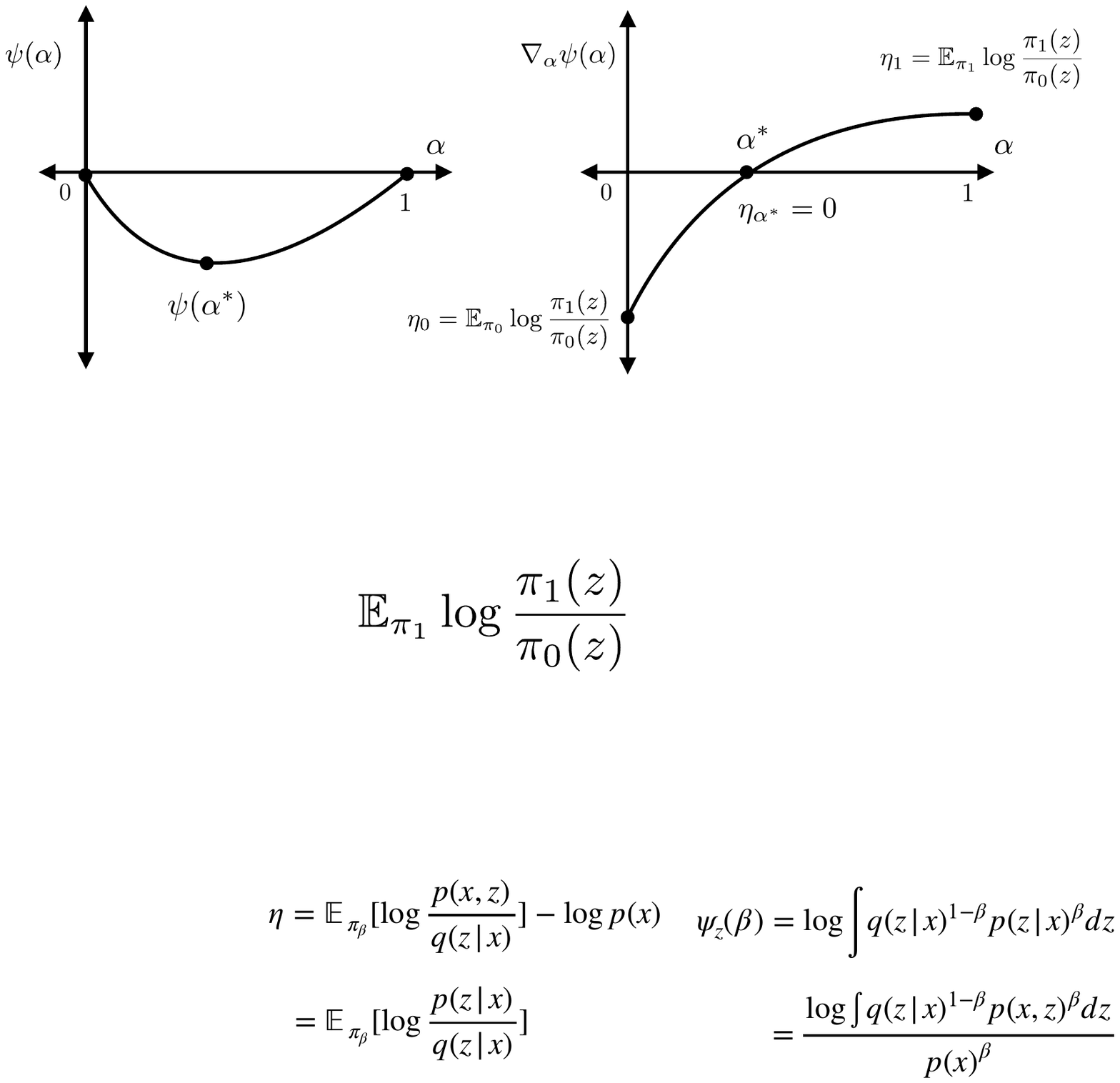}
%\vspace*{.175cm}
\vspace*{.525cm}
\caption{Chernoff point on $\psi(\beta)$} \label{fig:chernoff_logzb} 
\end{minipage}
\end{figure*}

\paragraph{Chernoff Information}
While each choice of $\beta_{\eta}$ determines a likelihood ratio test and error region, how should we choose this parameter?   Regardless of the prior probabilities $p_0, p_1$ that we might assign to each hypothesis in a Bayesian setting, the Chernoff information provides the best achievable error exponent in the large sample limit (\cite{nielsen2013information}, \cite{cover2012elements} Ch. 11).
\begin{align}
C^{*} := - \min_{\beta} C(\beta) &= - \min \limits_{\beta} \, \log \int \pi_0(z)^{1-\beta} \pi_1(z)^{\beta} dz  \label{eq:chernoff} %\\
%&= - \max \limits_{\beta} \,  (1-\beta) \, \psi(0) + \beta \, \psi(1) - \psi \big( \beta \big) \label{eq:jgc}
\end{align}
%\max \limits_{\beta} \, (1-\beta) \, \psi(0) + \beta \, \psi(1) - \psi \big( \beta \big)
Notice that Chernoff information in \eqref{eq:chernoff} involves the log-partition function $C(\beta)$ for the geometric mixture between \textit{normalized} $\pi_0$ and $\pi_1$, %with $C^{*} = - \min_{\beta} C(\beta)$. 
whereas we have defined $\psi(\beta) = \log \int \tilde{\pi}_0(z)^{1-\beta} \tilde{\pi}_1(z)^{\beta} dz $ using \textit{unnormalized} $\tilde{\pi}_0$ and $\tilde{\pi}_1$.  Rewriting $C(\beta)$ using $\pi_0 = \tilde{\pi}_0 / Z_0$ and $\pi_1 = \tilde{\pi}_1 / Z_1$, we can pull out factors of $(1-\beta) \log Z_0$ and $\beta \log Z_1$ to obtain the relation $C(\beta) = \psi(\beta) - (1-\beta) \log Z_0 - \beta \log Z_1$.   The Chernoff information can thus be written using the Jensen gap $\mathcal{J}_{\psi, \{ 1-\beta, \beta \}, \{ 0,1 \} }$ from \eqref{eq:jg00} %negative gap in Jensen's inequality for $\psi(\beta)$,
% with $\psi(\beta) = \log \int \tilde{\pi}_0(z)^{1-\beta} \tilde{\pi}_1(z)^{\beta} dz $.    Rewriting $C(\beta)$ using $\pi_0 = \tilde{\pi}_0 / Z_0$ and $\pi_1 = \tilde{\pi}_1 / Z_1$, we can pull out the normalization factors $(1-\beta) \log Z_0$ and $\beta \log Z_1$ to relate $C(\beta)$ and $\psi(\beta)$ via a gap in Jensen's inequality  
\begin{align}
C^{*} := - \min_{\beta} C(\beta) 
&= - \max \limits_{\beta} \,  (1-\beta) \, \psi(0) + \beta \, \psi(1) - \psi \big( \beta \big) \, . \label{eq:jgc} %&= - \min \limits_{\beta} \, \log \int \pi_0(z)^{1-\beta} \pi_1(z)^{\beta} dz  \label{eq:chernoff} \\
\end{align}
%$\psi(\beta) = \log \int \tilde{\pi}_0(z)^{1-\beta} \tilde{\pi}_1(z)^{\beta} dz $.

%To rewrite \eqref{eq:chernoff} as a gap in Jensen's inequality \eqref{eq:jgc}, we have expanded $\pi_0 = \tilde{\pi}_0 / Z_0$ and $\pi_1 = \tilde{\pi}_1 / Z_1$, with $\psi(\beta) = \log \int \tilde{\pi}_0(z)^{1-\beta} \tilde{\pi}_1(z)^{\beta} dz $. 

%and pulling the $log$ normalizing constants out from $\pi_0 = \tilde{\pi}_0 / Z_0$ and $\pi_1 = \tilde{\pi}_1 / Z_1$ , we can 
% Writing unnormalized $\pi_0 = \tilde{\pi}_0 / Z_0$ and $\pi_1 = \tilde{\pi}_1 / Z_1$ in the likelihood ratio family \eqref{eq:lkd_ratio_fam}, with $\psi(\beta) = \log \int \tilde{\pi}_0(z)^{1-\beta} \tilde{\pi}_1(z)^{\beta} dz $, we can rewrite \eqref{eq:chernoff} as a gap in Jensen's inequality
% \begin{align}
%     C^{*} &= - \max \limits_{\beta} \,  (1-\beta) \, \psi(0) + \beta \, \psi(1) - \psi \big( \beta \big) \, . \label{eq:jgc}
% \end{align}

The optimum over $\beta$, or $\beta^{*}$,  is denoted the \textit{Chernoff point} \cite{nielsen2013information}.  
In App. \ref{app:max_beta}, we derive the moment-matching condition
\begin{align}
    \eta_{\beta^{*}} = \frac{\psi(\beta_1) - \psi(\beta_0) }{ \beta_1 - \beta_0} \label{eq:chern}
\end{align}
which holds between arbitrary $\beta_0, \beta_1$ and implies $\eta_{\beta^{*}} = \psi(1)-\psi(0)$ for $\beta_0=0, \beta_1 =1$.
%At this optimum, denoted the Chernoff point \cite{nielsen2013information}, 
At this critical point, the \textsc{kl} divergence to the endpoints is equal, as shown in App. \ref{app:equal_kl}
\begin{align}
D_{\KL}[\pi_{\beta^{*}}(z) || \pi_0(z)] = D_{\KL}[\pi_{\beta^{*}}(z) || \pi_1(z)] \label{eq:equal_kl} \, .
\end{align}
%For normalized $\pi_0, \pi_1$ with $\psi(0) = \psi(1) = 0$ in \eqref{eq:chern}, the optimal decision rule is given by a threshold of $\eta_{\beta^{*}} = \mathbb{E}_{\pi_{\beta^{*}}} \log \frac{\pi_1(\z)}{\pi_0(\z)} = 0 $.   %With the unnormalized likelihood ratio $\log \tilde{\pi}_1(\z) / \pi_0(\z)$, we show in the next section that the Chernoff poinrt 

\paragraph{Chernoff Point on the TVO Integrand}
For the unnormalized likelihood ratio $\log \tilde{\pi}_1(\z) / \pi_0(\z)$, we can interpret the Chernoff point using  thermodynamic integration bounds \eqref{eq:tvo_lb_ub2} %from the \gls{TVO} \cite{masrani2019thermodynamic, brekelmans2020all}
%Recalling the thermodynamic integration bounds \eqref{eq:tvo_lb_ub2} used in \gls{TVO} \cite{masrani2019thermodynamic, brekelmans2020all}
\begin{align}
  \hspace*{-.2cm} \sum \limits_{t=0}^{T-1} (\beta_{t+1} - \beta_{t}) \cdot \mathbb{E}_{\pi_{\beta_{t}}} \big[ \log \frac{p_{\theta}(\vx,\vz)}{q(\vz|\vx)} \big] \, \leq \, \log Z_1 \, \leq \,\sum \limits_{t=0}^{T-1} (\beta_{t+1} - \beta_{t}) \cdot \mathbb{E}_{\pi_{\beta_{t+1}}} \big[ \log \frac{p_{\theta}(\vx,\vz)}{q(\vz|\vx)} \big] \label{eq:tvo_lb_ub2} \, .
    % \log \frac{1}{K} \sum \limits_{k=1}^K \wkt := \mathcal{L}^{\gls{TVI}}_{\gls{IWAE}_{\textsc{lb}}}\label{eq:ti_iwae_lb}
\end{align}
% \begin{align}
%   \sum \limits_{t=0}^{T-1} (\beta_{t+1} - \beta_{t}) \cdot \mathbb{E}_{\pi_{\beta_{t}}} \big[ \log \frac{\tilde{\pi}_1(\vx, \vz)}{\tilde{\pi}_0(\vz)} \big] \, \leq \, \log Z_1 \, \leq \,\sum \limits_{t=0}^{T-1} (\beta_{t+1} - \beta_{t}) \cdot \mathbb{E}_{\pi_{\beta_{t+1}}} \big[ \log \frac{\tilde{\pi}_1(\vx, \vz)}{\tilde{\pi}_0(\vz)} \big] %\label{eq:tvo_lb_ub3} \, ,
%     % \log \frac{1}{K} \sum \limits_{k=1}^K \wkt := \mathcal{L}^{\gls{TVI}}_{\gls{IWAE}_{\textsc{lb}}}\label{eq:ti_iwae_lb}
% \end{align}
With $\pi_0(\vz)= q(\vz|\vx)$ as in \gls{TVO} \cite{masrani2019thermodynamic, brekelmans2020all}, we note that the integrand at $\beta_0=0$ corresponds to the familiar \gls{ELBO}, $\mathbb{E}_{\pi_{0}} \big[ \log \frac{\tilde{\pi}_1(\vx, \vz)}{\pi_0(\vz|\vx)} \big] = \log Z_1(\vx) - D_{\KL}[q(\vz|\vx)|| p_{\theta}(\vz|\vx)]$.  %This reduces  for $\pi_0(\vz)= q(\vz|\vx)$ and lower bounds the marginal likelihood $\log \px$.  
Similarly, at $\beta_1=1$, the integrand $\mathbb{E}_{\pi_{1}}[\cdot] = \log Z_1(\vx) + D_{\KL}[p_{\theta}(\vz|\vx)|| q(\vz|\vx)]$ is an upper bound.  

Since $\psi(1) = \log \px$ and $\psi(0) =0$, the condition for the Chernoff point in \eqref{eq:chern} corresponds to \begin{align}
    \eta_{\beta^{*}} = \mathbb{E}_{\pi_{\beta^{*}}}\bigg[\log \frac{p_{\theta}(\vx,\vz)}{q(\vz|\vx)} \bigg] = \log \px \, ,
\end{align}%$\eta_{\beta^{*}} = \mathbb{E}_{\pi_{\beta^{*}}}[\cdot] = \log \px$,
 or the point after which the expected likelihood ratio switches from an lower bound to an upper bound.  We visualize this in Fig. \ref{fig:chernoff}, with $\eta_{\beta^{*}}$, as a point on the y-axis, equal to the area under the curve, $\log \px$.  Note that the red shaded regions correspond to the \kld divergence from $\pi_{\beta^{*}}$ to each endpoint (see \cite{brekelmans2020all}), and will have equal area due to \cref{eq:equal_kl}.
% \begin{align}
%     \psi^{*}(\log \px) = \sup \limits_{\beta} \beta \cdot \log \px - \psi(\beta) = \beta^{*} \cdot \log \px - \psi(\beta^{*})
% \end{align}

\section{Conclusion}
We have presented likelihood ratio exponential families as a common framework for understanding \gls{TVO}, \gls{RD}, \gls{IB}, and \textsc{rdc} optimizations in terms of conjugate duality and hypothesis testing.   These insights may be useful for improving mutual information estimators which leverage intermediate distributions \cite{rhodes2020telescoping}, learn a binary classifier distinguishing samples from $\tilde{\pi}_0, \tilde{\pi}_1$ \cite{tsai2020neural,liao2020demi}, or involve a neural network `critic' whose optimal function output is the true likelihood ratio %\cite{ %belghazi2018mine, 
\cite{poole2019variational}.%, %song2019understanding}.   

While it is natural to introduce additional sufficient statistics from the exponential family perspective, thermodynamic integration bounds and hypothesis testing interpretations remain to be clarified in higher dimensions as in \textsc{rdc}.  Further exploring the approach of \citet{gao2020free}, for evolving model parameters and Lagrange multipliers along the equilibrium surface of solutions to \cref{eq:legendre}, is an exciting future direction.  Beyond the applications shown in \cite{gao2020free}, this could lead to replacing heuristics such as \textsc{kl} annealing in $\beta$-\gls{VAE} with more principled dynamics for $\beta$ over the course of optimization.

%\section*{Acknowledgements}
%RB thanks Alireza Makhzani and Vaden Masrani for helpful comments and discussion.
%\input{sections/new_main/3_rdc}

% Using equilibrium in the Legendre duality expression in \eqref{eq:legendre} as a condition for incrementing $\beta$ in optimization heuristics \textsc{kl}

% As several existing variational bounds on mutual information  \cite{belghazi2018mine, poole2019variational, song2019understanding} involve a neural network `critic' whose optimal output is the true likelihood ratio, our approach may lend insights for improving 

%   Finally, several existing variational bounds on mutual information  \cite{belghazi2018mine, poole2019variational, song2019understanding} involve a neural network `critic' whose optimal output is the true likelihood ratio.  Although our approach appears to rely on tractable joint and marginal densities, it lend insights for improving mutual information estimators which leverage intermediate distributions \cite{rhodes2020telescoping} or  binary classification of samples \cite{liao2020demi}.  

% \clearpage
% \input{sections/main/intro.tex}
% \input{sections/main/likelihood_ratio.tex}
% %\input{sections/main/variational_rep.tex}
% \input{sections/main/hypothesis_testing}
% \input{sections/main/examples}
% %\input{sections/main/rdc.tex}
% %\input{sections/main/background.tex}

% %\input{sections/method.tex}
% %\input{sections/related.tex}

% %\input{sections/app/q}

% %\input{sections/alpha_proj.tex}

%\clearpage
\bibliographystyle{unsrtnat}
\bibliography{ref}
\clearpage
\appendix 
\section{Conjugate as a KL Divergence} \label{app:conj}
When considering an exponential family of the form
\begin{align} %-d\textbf{}(\vx, \vz)
    \pi_{\beta}(\vz)  = \pi_0(\vz) \exp \{ \beta  \cdot  \phi(\vz) \,   - \psi( \beta) \} \, . \label{eq:lkd_ratio_famA}
\end{align}
we show that $\psi^{*}(\eta)$ takes the form of a \textsc{kl} divergence when considering a base measure $\pi_0(\vz)$. %q(\vz|\vx)$ as in the \gls{TVO}.
\begin{align}
    \psi^{*}(\eta) &= \sup \limits_{\beta} \beta \cdot \eta - \psi(\beta) \label{eq:dual_opt} \\
    &= \beta_{\eta} \cdot \eta - \psi(\beta_{\eta}) \nonumber \\
    &=  \mathbb{E}_{\pi_{\beta_{\eta}}} [\beta_{\eta} \cdot \phi(\vz)] - \psi(\beta_{\eta}) \nonumber \\
    &= \mathbb{E}_{\pi_{\beta_{\eta}}}  [\beta_{\eta} \cdot \phi(\vz)]  - \psi(\beta_{\eta}) \pm \mathbb{E}_{\pi_{\beta_{\eta}}} [\log \pi_0(\vz) ] \nonumber \\
    &= \mathbb{E}_{\pi_{\beta_{\eta}}} [ \log \pi_{\beta_{\eta}(\vz)} - \log \pi_0(\vz)] \nonumber \\
    &= D_{KL}[\pi_{\beta_{\eta}}(\vz) || \, \pi_0(\vz)] \label{eq:conjugate_kl2}
    %&= \mathbb{E}_{\pi_{\beta_{\eta}}}  \, \log \frac{p(\vx, \vz)^{\beta_{\eta}} }{q(\vz|\vx)^ {\beta_{\eta}}} - \psi(\beta_{\eta}) \pm \mathbb{E}_{\pi_{\beta_{\eta}}} [\log \pi_0(\vz) ] \nonumber \\
    %&= \mathbb{E}_{\pi_{\beta_{\eta}}}  \log \frac{q(\vz|\vx)^{1- \beta_{\eta}} p(\vx, \vz)^{\beta_{\eta}}}{Z_{\beta_{\eta}} \, \cdot \, q(\vz|\vx)} \label{eq:add_subtract} \\
    %&= D_{KL}[\pi_{\beta_{\eta}}(\vz|\vx) || \, q(\vz|\vx)] \label{eq:conjugate_kl}
\end{align}
 where we have added and subtracted a factor of $\mathbb{E}_{\pi_{\beta_{\eta}}} \log \pi_0(\vz)$ in the fourth line.  If $\pi_0(\vz)$ is constant with respect to $\vz$ using, for example, the uniform measure, then $D_{KL}[\pi_{\beta_{\eta}}(\vz) || \, \pi_0(\vz)]$ reduces to the familiar definition of the conjugate $\psi^{*}(\eta)$ as the negative entropy $ \mathbb{E}_{\pi_{\beta_{\eta}}} \log \pi_{\beta_{\eta}} (\vz)$ \cite{wainwrightjordan}.

\section{KL Divergence as a Bregman Divergence}\label{app:breg_kl}
For an exponential family with partition function $\psi(\beta)$ and sufficient statistics $\phi(z)$ over a random variable $\vz$, the Bregman divergence $D_{\psi}$ corresponds to a KL divergence.  Recalling that $\nabla_{\beta} \psi(\beta)= \eta_{\beta} = \mathbb{E}_{\pi_\beta}[\phi(z)]$, we simplify the definition of the Bregman divergence  to obtain
\begin{align}
D_{\psi}[\beta : \beta^{\prime}] &=   \psi(\beta) - \psi(\beta^{\prime}) - \langle \beta - \beta^{\prime}, \nabla \psi(\beta^{\prime}) \rangle \nonumber \\
&= \psi(\beta) - \psi(\beta^{\prime}) -  \beta \cdot \eta_{\beta^{\prime}} + \beta^{\prime} \cdot \eta_{\beta^{\prime}} \nonumber\\
&= \psi(\beta) - \psi(\beta^{\prime}) -  \mathbb{E}_{\pi_{\beta^{\prime}}}[\beta \cdot \phi(z)] + \mathbb{E}_{\pi_{\beta^{\prime}}}[\beta^{\prime} \cdot \phi(z)] \nonumber \\[1.5ex]
&= \underbrace{\mathbb{E}_{\pi_{\beta^{\prime}}}\big[\beta^{\prime} \cdot \phi(z) - \psi(\beta^{\prime})\big] + \mathbb{E}_{q}[\pi_0(\vz)]}_{ \log \pi_{\beta^{\prime}}(\vz) } - \underbrace{\mathbb{E}_{\pi_{\beta^{\prime}}}\big[\beta \cdot \phi(z) - \psi(\beta)\big] - \mathbb{E}_{q}[\pi_0(\vz)]}_{ \log \pi_{\beta}(\vz) } \nonumber \\
&= \mathbb{E}_{\pi_{\beta^{\prime}}} \big[ \log \pi_{\beta^{\prime}}(\vz) - \log\pi_{\beta}(\vz) \big] \nonumber \\
&= D_{KL}[\pi_{\beta^{\prime}}(\vz) || \pi_{\beta}(\vz)] \label{eq:breg_kl}
\end{align}
where we have added and subtracted terms involving the base measure $\pi_0(\vz)$, and used the definition of our exponential family from \eqref{eq:lkd_ratio_famA}.  The Bregman divergence $D_{\psi}$ is thus equal to the KL divergence with arguments reversed.

\subsection{Canonical Divergence}\label{app:canonical}  
We can also show that the Bregman divergences $D_{\psi}, D_{\psi^{*}}$ are equivalent up to reordering of the arguments
\begin{align}
    D_{\psi}[\beta : \beta^{\prime}] = D_{\psi^{*}}[\eta^{\prime} : \eta]
\end{align}
where we abbreviate $\eta^{\prime} = \eta_{\beta^{\prime}}$ and $\eta = \eta_{\beta}$.  The conjugacy relationships 
\begin{align}
\psi^{*}(\eta^{\prime}) = \beta^{\prime} \cdot \eta^{\prime} - \psi(\beta^{\prime})  \qquad
\psi(\beta) = \beta \cdot \eta - \psi^{*}(\eta)
\end{align}
can be used to translate between these dual divergences. 
\begin{align}
    D_{\psi}[\beta : \beta^{\prime}] &=   \psi(\beta) - \psi(\beta^{\prime}) - \langle \beta - \beta^{\prime}, \nabla \psi(\beta^{\prime}) \rangle \\
    &= \psi(\beta) - \psi(\beta^{\prime}) -  \beta \cdot \eta^{\prime} + \beta^{\prime} \cdot \eta^{\prime} \nonumber\\ 
    &= \psi(\beta) + \psi^{*}(\eta^{\prime}) -  \beta \cdot \eta^{\prime} \label{eq:canonical} \\
 &= \psi^{*}(\eta^{\prime}) - \psi^{*}(\eta) +  \beta \cdot \eta -  \beta \cdot \eta^{\prime}  \nonumber \\
  &= \psi^{*}(\eta^{\prime}) - \psi^{*}(\eta) - \langle    \eta^{\prime} - \eta  , \nabla \psi^{*}(\eta) \rangle \nonumber\\
 &= D_{\psi^{*}}[ \eta^{\prime}:\eta ] \nonumber
\end{align}
The intermediate expression \eqref{eq:canonical} is known as the canonical form of the divergence \cite{amari2016information}   
\begin{align}
      \psi(\beta) + \psi^{*}(\eta^{\prime}) -  \beta \cdot \eta^{\prime} = D_{\psi}[\beta : \beta^{\prime}] = D_{\psi^{*}}[ \eta^{\prime}:\eta ] 
\end{align}
Comparing with the expression for Legendre duality in \eqref{eq:legendre}, note that the correspondence between $\beta$ and $\eta$ implies that the divergence vanishes, since both parameterizations refer to the same distribution
\begin{align}
 \psi(\beta_{\eta}) + \psi^{*}(\eta_{\beta}) - \beta_{\eta} \cdot \eta_{\beta} = 0  \implies D_{\psi}[\beta_{\eta} : \beta_{\eta}] = D_{\psi^{*}}[ \eta_{\beta} :\eta_{\beta} ] = 0
\end{align}

%  To see that the Bregman divergence $D_{\psi^{*}}$ or $D_{\psi}$ correspond to the KL divergence,  recall that 
%  $\nabla \psi^{*}(\eta^{\prime}) = \beta^{\prime}$ due to conjugacy \cite{wainwrightjordan}.   We can then write
%  consider 
%  \begin{align}
%      D_{\psi^{*}}[\eta : \eta^{\prime}] &= \psi^{*}(\eta) - \psi^{*}(\eta^{\prime}) - \langle \eta - \eta^{\prime}, \nabla \psi^{*}(\eta^{\prime}) \rangle \\
%      &= D_{KL}[\pi_{\beta_{\eta}}(\vz) || \, \pi_0(\vz)] - D_{KL}[\pi_{\beta_{\eta^{\prime}}}(\vz) || \, \pi_0(\vz)] - \langle \eta - \eta^{\prime},  \beta^{\prime} \rangle \\
%      &= \mathbb{E}_{\pi_{\beta_{\eta}}}\big[\log \cancel{\frac{\pi_0(\vz)}{\pi_0(\vz)}} + \cancel{\log \exp} \{\beta \cdot \phi(\vz) - \psi(\beta) \}\big] \\
%      &\phantom{===}- \mathbb{E}_{\pi_{\beta_{\eta^{\prime}}}}\big[\cancel{\frac{\pi_0(\vz)}{\pi_0(\vz)}} + \cancel{\log \exp} \{\beta^{\prime} \cdot \phi(\vz) - \psi(\beta^{\prime}) \} \big]  -    \beta^{\prime} \cdot \eta  +   \beta^{\prime} \cdot \eta^{\prime} \\[1.25ex]
%      &= \beta \cdot \eta - \psi(\beta) - \beta^{\prime} \cdot \eta^{\prime} + \psi(\beta^{\prime}) -    \beta^{\prime} \cdot \eta  +   \beta^{\prime} \cdot \eta^{\prime} \\
%      &= 
%  \end{align}

\section{Information Bottleneck as Rate-Distortion} \label{app:ib_pf}
The Information Bottleneck (IB) method \cite{tishby1999information} defines the `relevant information' in a representation, $I(Y:Z)$, via another variable of interest $Y$, often taken to be a label.   The IB objective then seeks a minimal encoding $Z$ which maintains a given level of predictive ability about the target.
\begin{align}
    \min \limits_{q(\vz|\vx)} ~~&\iq ~~\text{subj. to.} ~~I_q(Y;Z) \geq I_c  \label{eq:ib} %\\
\end{align}
where we let $I_q$ reflect the exact mutual information for the true data and label distributions $q(x) q(y|x)$ with a given encoding function $q(z|x)$.

When the desired information constraint equals the total information $I_c = I_q(X;Y)$ that the data source contains about the label, \eqref{eq:ib} corresponds to the problem of finding the minimal sufficient statistics $z$ for $y$ with respect to $x$.  The IB objective generalizes this optimization for smaller values of $I_c$.

Since $I_q(Y;Z)= H_q(Y) - H_q(Y|Z) = -\mathbb{E}_q \log q(y) + \mathbb{E}_q \log q(y|\vz)$, we can ignore the label entropy as a constant with respect to $z$.   While it may be difficult to obtain the true posterior $q(y|\vz)$ of the labels given latent variables , we can instead optimize a variational classifier $p(y|\vz)$.  This provides an lower bound on the mutual information since $D_{KL}[q(y|\vz)||p(y|\vz)] \geq 0$ and is also known as the `test channel' in rate-distortion theory (\cite{cover2012elements} Ch. 13).   Applying this inequality within the unconstrained \gls{IB} Lagrangian,
\begin{align}
    \mathcal{L}_{IB} &= \max \limits_{\beta} \min \limits_{q(\vz|\vx)} ~~\iq  - \beta \, \big ( -\mathbb{E}_q \log q(y) + \mathbb{E}_q \log q(y|\vz) - I_c \big ) \nonumber \\
    &\geq \max \limits_{\beta} \min \limits_{q(\vz|\vx), p(y|\vz)} ~~\iq  - \beta \, \big ( -\mathbb{E}_q \log q(y) + \mathbb{E}_q \log p(y|\vz) - I_c \big ) \nonumber \\
    &= \max \limits_{\beta} \min \limits_{q(\vz|\vx), p(y|\vz)} ~~\iq  - \beta \, \mathbb{E}_{q(y(\vx), \vz)}[ p(y|\vz) ] + \text{const} \label{eq:ib_lagr}
\end{align}
where $y(\vx)$ indicates the label of a given data point.

As shown in \citet{tishby1999information}, the Information Bottleneck is a special case of rate-distortion with 
\begin{align}
c(y(\vx), \vz) = D_{KL}[q(y|\vx) || q(y|\vz)] = \mathbb{E}_{q}[ q(y|\vx) ] - \mathbb{E}_{q}[ q(y|\vz) ]   \label{eq:rdlagr}
\end{align}
Comparing \eqref{eq:ib_lagr} with \eqref{eq:rdlagr}, note that $\mathbb{E}_{q}[q(y|\vx) ]$ is a constant, leaving the effective distortion measure as $ c(y(\vx), \vz) = - \mathbb{E}_{q(y(\vx)|\vz)}[ q(y|\vz) ] $.  If this quantity is intractable, we can instead define the distortion function using a variational $p(y|\vz)$ as above.

\section{Bregman Information and Jensen Gaps} \label{sec:jensen}
Imagine we are interested in minimizing the expected Bregman divergence $D_{f}$ to a single representative point, which may then be thought of as the optimal codeword for $R = 0$ in a rate-distortion scenario using a Bregman divergence distortion.  \citet{Banerjee2005} show that, regardless of the divergence, the minimizing point will be the mean with respect to a desired measure, and the expected divergence will be a gap in Jensen's inequality for the function $f$. %(see App. \ref{app:jensen_gaps})

\begin{theorem}[Bregman Information, \cite{Banerjee2005}]\label{thm:bregman} 
Let X be a random variable that takes values in $\mathcal{X} = \{\vx_i\}_{i=1}^{n} \subseteq \mathcal{S} \subseteq \mathbb{R}^d$ following a positive probability measure $\nu$ such that $\mathbb{E}_{\nu}[X] \in ri(\mathcal{S})$.  Given a Bregman divergence $D_{f} : \mathcal{S} \times ri(\mathcal{S}) \mapsto [0,\infty]$, the problem:
\begin{align*}
\mathcal{J}_{\nu, f}(s^{*}) = \min \limits_{s \in ri(\mathcal{S})} \mathbb{E}_{\nu}[ D_f(X, s) ]
\end{align*}
has a unique minimizer given by the mean $s^{*} = \mu = \mathbb{E}_{\nu}[X]$.  At this $\argmin$, $\mathcal{J}_{\nu, f}(s^{*})$ corresponds to a gap in Jensen's inequality for the convex function $f$ and expectations with respect to $\nu$.
\begin{align*}
   \mathbb{E}_{\nu}[ f(X) ] - f(\mathbb{E}_{\nu}[X]) = \mathcal{J}_{\nu, f}(s^{*}) 
\end{align*}
\end{theorem}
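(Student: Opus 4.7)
The plan is to carry out a direct algebraic decomposition of $\mathbb{E}_\nu[D_f(X,s)]$ that separates a term independent of $s$ from a single Bregman divergence of $s$ against the mean $\mu := \mathbb{E}_\nu[X]$. Since any Bregman divergence is nonnegative and vanishes only at equality of its arguments (by the strict convexity built into the definition of $f$), this decomposition will read off both the unique minimizer and the value of the minimum in one shot.

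Concretely, I would start by expanding using the definition of the Bregman divergence,
\begin{align*}
\mathbb{E}_\nu[D_f(X,s)] &= \mathbb{E}_\nu\bigl[f(X) - f(s) - \langle X - s, \nabla f(s)\rangle\bigr] \\
&= \mathbb{E}_\nu[f(X)] - f(s) - \langle \mathbb{E}_\nu[X] - s, \nabla f(s)\rangle,
\end{align*}
where I have used linearity of expectation and the fact that $f(s)$ and $\nabla f(s)$ do not depend on $X$. Next, with $\mu = \mathbb{E}_\nu[X] \in \mathrm{ri}(\mathcal{S})$, I would add and subtract $f(\mu)$ and regroup the last two terms to recognize the Bregman divergence $D_f(\mu,s) = f(\mu) - f(s) - \langle \mu - s, \nabla f(s)\rangle$, giving
\begin{align*}
\mathbb{E}_\nu[D_f(X,s)] = \bigl(\mathbb{E}_\nu[f(X)] - f(\mu)\bigr) + D_f(\mu, s).
\end{align*}

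From here both conclusions are immediate: the first bracket is independent of $s$, so minimizing over $s \in \mathrm{ri}(\mathcal{S})$ reduces to minimizing $D_f(\mu,s)$, which is nonnegative and equals zero iff $s = \mu$ by strict convexity of $f$. This gives the unique minimizer $s^* = \mu$, and substituting $s = \mu$ back yields $\mathcal{J}_{\nu,f}(s^*) = \mathbb{E}_\nu[f(X)] - f(\mu)$, which is exactly the Jensen gap.

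There is no real obstacle: the argument is a one-line identity plus a uniqueness remark. The only points that warrant care are (i) checking that the hypothesis $\mu \in \mathrm{ri}(\mathcal{S})$ is what ensures $\nabla f(s)$ and therefore $D_f(\mu,s)$ are well-defined at the candidate optimum, and (ii) invoking strict convexity of the Bregman generator $f$ to guarantee that $D_f(\mu,s) = 0$ forces $s = \mu$, which gives uniqueness rather than merely optimality of the mean. No induction, limiting argument, or measure-theoretic subtlety is needed beyond linearity of $\mathbb{E}_\nu$.
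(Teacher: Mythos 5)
Your proof is correct and takes essentially the same route as the paper: the paper computes $\mathcal{J}_{\nu,f}(s) - \mathcal{J}_{\nu,f}(\mu) = D_f(\mu,s) \geq 0$ and then evaluates $\mathcal{J}_{\nu,f}(\mu)$ separately, which is just a rearrangement of your single decomposition $\mathbb{E}_\nu[D_f(X,s)] = \bigl(\mathbb{E}_\nu[f(X)] - f(\mu)\bigr) + D_f(\mu,s)$. Both rely on the same ingredients: linearity of expectation, vanishing of the inner-product term at $s=\mu$, and strict convexity of $f$ for uniqueness.
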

\begin{proof}
Consider a point $s$ and the mean $\mu = \mathbb{E}_{\nu}[X]$, both in $ri(\mathcal{S})$ so that $\mathcal{J}_{\nu, f}$ is well defined. 
\begin{align*}
    \mathcal{J}_{\nu, f}(s) - \mathcal{J}_{\nu, f}(\mu) &= \mathbb{E}_{\nu} D_{f}(\vx, s) -  \mathbb{E}_{\nu} D_{f}(\vx, \mu) \\
    &= \cancel{\mathbb{E}_{\nu} f(x)} - f(s) - \langle \mathbb{E}_{\nu} x - s, \nabla f(s) \rangle \\
     & \phantom{=} - \cancel{\mathbb{E}_{\nu} f(x)} + f(\mu) - \langle \cancel{\mathbb{E}_{\nu} x - \mu}, \nabla f(\mu) \rangle \\
    &= f(\mu) - f(s) - \langle \mu - s, \nabla f(s) \rangle \\
    &= D_f[\mu, s] \geq 0
\end{align*}
with equality only when $s = \mu =\mathbb{E}_{\nu} [X]$ if $f$ is strictly convex.  Then,
\begin{align*}
    \mathcal{J}_{\nu, f}(\mu) &= \mathbb{E}_{\nu}[ D_f(X, \mu) ] \\
    &= \mathbb{E}_{\nu}[f(X)] - f(\mu) - \underbrace{\mathbb{E}_{\nu} \langle x - \mu}_{0}, \nabla_x f(\mu) \rangle \\
    &= \mathbb{E}_{\nu}[f(X)] - f(\mu)
\end{align*}
which amounts to a gap in Jensen's inequality for the function $f$ and measure $\nu$.
\end{proof}
%For the Bregman Information considered here, $f(X)$ is the log partition function $\psi(\beta)$.  T

\section{Gap in Jensen's Inequality for \texorpdfstring{$\psi(\beta)$}{the Log Partition Function}}
In this section, we analyse the Bregman Information and Jensen gap associated with the log partition function of the likelihood ratio exponential family.  As in Sec. \ref{sec:vrep}, this corresponds to the variational representation of \citet{grosse2013annealing}, while maximizing the Jensen gap will lead to the Chernoff point.  We give proofs of intermediate results at the end of the section.

With $D_{\psi}$ as the divergence associated with the convex function $f(X)=\psi(\beta)$, we take the expected Bregman divergence using a convex combination ($\{1-\alpha, \alpha \}$) over arguments $\{ \beta_0, \beta_1\}$.   
\begin{align}   
  \mathcal{J}_{\psi, \alpha} &= \min \limits_{\beta^{\prime}}  \, \, (1-\alpha) D_{\psi}[\beta_0 : \beta^{\prime} ] + \alpha D_{\psi}[\beta_1: \beta^{\prime}] \label{eq:jg_min_primal}\\
  \mu^{(\alpha)}_{\beta} &= (1-\alpha) \, \beta_0 + \alpha \, \beta_1 %\\
\end{align}
where \cref{thm:bregman} shows that the minimizer $\mu^{(\alpha)}_{\beta}$ occurs at the mean of the arguments.  After simplifying, we can see that this corresponds to a Jensen's inequality for the convex function $\psi(\beta)$ %$\beta_{\alpha}^{*}$, %we see that Jensen's inequality   
% can use  \cref{thm:bregman} to recognize this as a gap in 
\begin{align}
  \mathcal{J}_{\psi, \alpha} &= (1-\alpha) \,  D_{\psi}[\, \beta_0 :  \mu^{(\alpha)}_{\beta} ] + \, \alpha \, D_{\psi}[ \, \beta_1: \mu^{(\alpha)}_{\beta}]\\
  &= (1-\alpha) \, \psi(\beta_0) +  \alpha \, \psi(\beta_1) - \psi(\mu^{(\alpha)}_{\beta}) \\
  &\phantom{= (1-\alpha)}\underbrace{- (1-\alpha)(\beta_0 - \mu^{(\alpha)}_{\beta}) \nabla \psi(\mu^{(\alpha)}_{\beta}) - \alpha (\beta_1 - \mu^{(\alpha)}_{\beta})\nabla \psi(\mu^{(\alpha)}_{\beta})}_{=(\mu^{(\alpha)}-\mu^{(\alpha)})\nabla \psi(\mu^{(\alpha)}) = 0}  \nonumber \\[1.25ex]
  &= (1-\alpha) \, \psi(\beta_0) + \alpha \, \psi(\beta_1) - \psi \big(  (1-\alpha) \beta_0 + \alpha \, \beta_1 \big) \label{eq:jg_primal}
\end{align}
For the case of $\beta_0 = 0$ and $\beta_1 = 1$, we see that the optimal parameter is simply $\mu^{(\alpha)}_{\beta}  = \alpha$ so that
\begin{align}
    \mathcal{J}_{\psi, \alpha} = (1-\alpha) \, \psi(0) + \alpha \, \psi(1) - \psi(\alpha) 
\end{align}   

\citet{nielsen2011renyi} demonstrate the following lemma, showing the relationship between the Jensen's gap and R\'enyi divergence within an exponential family.

\begin{lemma} The R\'enyi divergence of order $\alpha$ between two distributions (indexed by natural parameters $\beta_0$ and $\beta_1$) within an exponential family (with log partition function $\psi(\beta)$) has the form of a gap in Jensen's inequality $\mathcal{J}_{\psi, \{1-\alpha, \alpha\}, \{\beta_0, \beta_1 \} }$, abbreviated  $\mathcal{J}_{\psi, \alpha}$
\begin{align}
   (1-\alpha) D_{\alpha}[\pi_{\beta_0}(z) : \pi_{\beta_1}(z)] &= - \log \int \pi_{\beta_0}(z)^{1-\alpha} \pi_{\beta_1}(z)^{\alpha} dz \\ 
 &=(1-\alpha) \, \psi(\beta_0) + \alpha \, \psi(\beta_1) - \psi \big(  (1-\alpha) \beta_0 + \alpha \, \beta_1 \big) \\
 &=  \mathcal{J}_{\psi, \alpha}  \label{eq:jg_lemma}
\end{align}
%where $\psi(\beta)$ is the log-partition function of the exponential family.
\end{lemma}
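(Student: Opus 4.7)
The plan is to verify the chain of three equalities by reducing the middle identity to a direct computation in the exponential family, then citing the earlier Jensen-gap derivation \eqref{eq:jg_primal} for the last step. The first equality is essentially definitional: the R\'enyi divergence of order $\alpha$ between two densities $p, q$ is
$$D_{\alpha}[p:q] = \frac{1}{\alpha-1}\log \int p(z)^{1-\alpha} q(z)^{\alpha} dz,$$
so multiplying both sides by $(1-\alpha)$ immediately produces $-\log \int \pi_{\beta_0}^{1-\alpha} \pi_{\beta_1}^{\alpha} dz$. There is nothing to prove here beyond noting this definition.

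The main work is the middle equality. First I would substitute the exponential family form \eqref{eq:lkd_ratio_fam},
$$\pi_{\beta}(z) = \pi_0(z) \exp\bigl\{ \beta \cdot \phi(z) - \psi(\beta) \bigr\},$$
into the product $\pi_{\beta_0}(z)^{1-\alpha} \pi_{\beta_1}(z)^{\alpha}$. The base measure exponents add to $1$, the sufficient-statistics terms combine linearly as $((1-\alpha)\beta_0 + \alpha \beta_1)\cdot \phi(z)$, and the log-partition terms factor outside the integral as $\exp\{-(1-\alpha)\psi(\beta_0) - \alpha\psi(\beta_1)\}$. The remaining integral is precisely the normalizer of the exponential family member at natural parameter $(1-\alpha)\beta_0 + \alpha\beta_1$, which by definition equals $\exp\{\psi((1-\alpha)\beta_0 + \alpha\beta_1)\}$. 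Taking $-\log$ of the resulting expression yields
$$(1-\alpha)\psi(\beta_0) + \alpha \psi(\beta_1) - \psi\bigl((1-\alpha)\beta_0 + \alpha\beta_1\bigr),$$
which is the desired middle equality.

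The final equality $= \mathcal{J}_{\psi, \alpha}$ is then immediate from \eqref{eq:jg_primal}, which already established that the Bregman-information-style Jensen gap for the convex function $\psi$ evaluated at the pair $(\beta_0, \beta_1)$ with weights $(1-\alpha, \alpha)$ equals exactly this same expression. So the three quantities coincide. I do not anticipate a real obstacle: the proof is a one-line exponential-family algebra sandwiched between a definition and an already-proved Jensen-gap identity. The only subtlety worth flagging is the requirement that $(1-\alpha)\beta_0 + \alpha\beta_1$ lie in the natural parameter space so that $\psi$ is finite there, which is guaranteed when $\alpha \in [0,1]$ by convexity of the domain.
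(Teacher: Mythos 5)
Your proposal is correct and follows essentially the same route as the paper: the first equality is definitional, the middle one is a direct substitution of the exponential-family form followed by collecting exponents and recognizing the normalizer $e^{\psi((1-\alpha)\beta_0+\alpha\beta_1)}$, and the last is the already-derived Jensen-gap identity \eqref{eq:jg_primal}. The only (cosmetic) difference is that you work in canonical coordinates $\pi_\beta = \pi_0 e^{\beta\cdot\phi - \psi(\beta)}$, whereas the paper's App.~\ref{app:renyi_jg} expands the geometric-mixture form $\pi_0^{1-\beta}\pi_1^{\beta}/Z_\beta$ and tracks the exponents of $\pi_0,\pi_1$ directly --- the same algebra, with your version applying verbatim to a general exponential family (as in the cited Nielsen reference) and your remark about $(1-\alpha)\beta_0+\alpha\beta_1$ lying in the natural parameter space being a worthwhile addition the paper leaves implicit.
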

\begin{proof}
The first equality follows from the definition of $D_{\alpha}$, and corresponds to the Chernoff coefficient in \eqref{eq:chernoff} or Jensen gap $\mathcal{J}_{\psi, \alpha}$.
We demonstrate the second equality for the likelihood ratio family in App. \ref{app:renyi_jg}, or see \citet{nielsen2011renyi} for the general case.
\end{proof}

%Note that $\mathcal{J}_{\psi, \alpha}$

\citet{van2014renyi} show that the scaled Renyi divergence $(1-\alpha) D_{\alpha}[\pi_{\beta_0} : \pi_{\beta_1}]$ is concave. 
%and corresponds to the gap in Jensen's inequality $\mathcal{J}_{\psi, \alpha}$ \cite{nielsen2011renyi}.   
For given endpoint distributions, we can thus seek to maximize $\mathcal{J}_{\psi, \alpha}$ as a function of $\alpha$.    

\begin{lemma} Maximizing the Jensen's gap $\mathcal{J}_{\psi, \alpha}$ obtained from arguments $\{\beta_0, \beta_1\}$ of the convex function $\psi(\beta)$, with respect to the choice of mixing weight $\{1-\alpha, \alpha \}$,
\begin{align}
    \argmax \limits_{\alpha} \, \mathcal{J}_{\psi, \alpha} =\argmax \limits_{\alpha} (1-\alpha) \, \psi(\beta_0) + \alpha \, \psi(\beta_1) - \psi \big(  (1-\alpha) \beta_0 + \alpha \, \beta_1 \big) \label{eq:argmax}
\end{align}
leads to the following condition
\begin{align}
 \eta_{\alpha^{*}} &= \frac{\psi(\beta_1) - \psi(\beta_0)}{\beta_1 - \beta_0} %= \log \px
\end{align}
\end{lemma}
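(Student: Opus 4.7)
The plan is to treat \eqref{eq:argmax} as a one-variable unconstrained optimization in $\alpha$ and apply a first-order optimality condition. The objective $\mathcal{J}_{\psi,\alpha}$ is smooth in $\alpha$ whenever $\psi$ is differentiable on the interval $[\beta_0,\beta_1]$ (which we have, since $\psi$ is analytic on the natural parameter space of the exponential family), so I can simply differentiate with respect to $\alpha$ and set the derivative equal to zero.

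First I would introduce the shorthand $\mu(\alpha) := (1-\alpha)\beta_0 + \alpha \beta_1$ and note that $\tfrac{d\mu}{d\alpha} = \beta_1 - \beta_0$. Differentiating
\begin{equation*}
\mathcal{J}_{\psi,\alpha} = (1-\alpha)\psi(\beta_0) + \alpha \psi(\beta_1) - \psi(\mu(\alpha))
\end{equation*}
with respect to $\alpha$ and applying the chain rule to the last term yields
\begin{equation*}
\frac{d\mathcal{J}_{\psi,\alpha}}{d\alpha} = \psi(\beta_1) - \psi(\beta_0) - (\beta_1 - \beta_0)\,\nabla \psi\bigl(\mu(\alpha)\bigr).
\end{equation*}
Setting this equal to zero and using the fundamental identity $\nabla\psi(\beta) = \eta_\beta = \mathbb{E}_{\pi_\beta}[\phi(\vz)]$ for exponential families (recalled in \eqref{eq:conjugate}), I obtain
\begin{equation*}
\eta_{\alpha^*} := \nabla\psi\bigl(\mu(\alpha^*)\bigr) = \frac{\psi(\beta_1)-\psi(\beta_0)}{\beta_1 - \beta_0},
\end{equation*}
which is the claimed moment-matching condition. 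Here $\eta_{\alpha^*}$ is understood as the moment parameter of the mixture distribution indexed by $\mu(\alpha^*)$, consistent with the notation of \eqref{eq:chern}.

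To confirm this critical point is indeed a maximum, I would briefly invoke the fact that $\psi$ is strictly convex, so $\psi(\mu(\alpha))$ is a strictly convex function of $\alpha$, and hence $-\psi(\mu(\alpha))$ is strictly concave; since the remaining terms $(1-\alpha)\psi(\beta_0)+\alpha\psi(\beta_1)$ are affine in $\alpha$, the whole objective $\mathcal{J}_{\psi,\alpha}$ is strictly concave in $\alpha$, so any stationary point is the unique global maximum. (Equivalently, one may cite the concavity of the scaled R\'enyi divergence $(1-\alpha)D_\alpha[\pi_{\beta_0}:\pi_{\beta_1}]$ noted by \citet{van2014renyi} together with the identification \eqref{eq:jg_lemma}.) There is no real obstacle here; the only subtlety is the interpretation of $\eta_{\alpha^*}$ as the expected sufficient statistic under $\pi_{\mu(\alpha^*)}$, which follows immediately from the duality relation $\eta = \nabla\psi(\beta)$ and does not require any additional machinery.
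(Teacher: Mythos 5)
Your proposal is correct and follows essentially the same route as the paper's own derivation in App.~\ref{app:max_beta}: differentiate the Jensen gap in $\alpha$, apply the chain rule to $\psi\bigl((1-\alpha)\beta_0+\alpha\beta_1\bigr)$, and invoke $\nabla\psi(\beta)=\eta_\beta$ to obtain the moment-matching condition. Your added remark on strict concavity (guaranteeing the stationary point is the unique global maximum) is a small but welcome supplement that the paper only gestures at via the citation to the concavity of the scaled R\'enyi divergence.
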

\begin{proof} See App. \ref{app:max_beta} for proof. \end{proof}

For $\beta_0 = 0$ and $\beta_1 = 1$, this suggests that the expected sufficient statistics $\eta_{\alpha}$ (with natural parameter $\mu_{\beta}^{(\alpha)} = \alpha$) should match the marginal likelihood $\log \px$.
\begin{align}
    \eta_{\alpha} &= \frac{\psi(1) - \psi(0)}{1-0} = \log \px
\end{align}

\begin{lemma}
At the maximum in Eq. \eqref{eq:argmax}, consider the distribution $\pi_{\beta^{*}_{\alpha}}$ in the same exponential family, with natural parameter $\beta^{*}_{\alpha} = \mu_{\beta}^{(\alpha)} = (1-\alpha) \, \beta_0 + \alpha \, \beta_1$, the Bregman divergences to each endpoint $\{\beta_0, \beta_1 \}$ are the same.
\begin{align}
     D_{\psi}[\beta_0 : \beta_{{\alpha}^{*}} ] &= D_{\psi}[\beta_1 : \beta_{{\alpha}^{*}} ] %\\
\end{align} 
Since the Bregman divergence $D_{\psi}$ within an exponential family corresponds to the KL divergence, we can equivalently write
\begin{align}
     D_{KL}[\pi_{\beta_{{\alpha}^{*}}} : \pi_{\beta_0}] &= D_{KL}[\pi_{\beta_{{\alpha}^{*}}} : \pi_{\beta_1}] 
\end{align}
\end{lemma}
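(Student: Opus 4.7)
The plan is to prove the equality of Bregman divergences directly by computing their difference and substituting the first-order stationarity condition from the preceding lemma; the KL statement then follows immediately from the identity $D_{\psi}[\beta:\beta']=D_{\KL}[\pi_{\beta'}\|\pi_{\beta}]$ established in \eqref{eq:divergences} and App.~\ref{app:breg_kl}.

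First, I would expand each Bregman divergence using its definition together with $\nabla\psi(\beta_{\alpha^*})=\eta_{\alpha^*}$:
\begin{align*}
D_{\psi}[\beta_0:\beta_{\alpha^*}] &= \psi(\beta_0)-\psi(\beta_{\alpha^*})-(\beta_0-\beta_{\alpha^*})\,\eta_{\alpha^*},\\
D_{\psi}[\beta_1:\beta_{\alpha^*}] &= \psi(\beta_1)-\psi(\beta_{\alpha^*})-(\beta_1-\beta_{\alpha^*})\,\eta_{\alpha^*}.
\end{align*}
Subtracting cancels the $\psi(\beta_{\alpha^*})$ terms and yields
\[
D_{\psi}[\beta_0:\beta_{\alpha^*}]-D_{\psi}[\beta_1:\beta_{\alpha^*}] \;=\; \psi(\beta_0)-\psi(\beta_1)+(\beta_1-\beta_0)\,\eta_{\alpha^*}.
\]

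Next, I would substitute the first-order condition proven in the preceding lemma,
\[
\eta_{\alpha^*}=\frac{\psi(\beta_1)-\psi(\beta_0)}{\beta_1-\beta_0},
\]
into the right-hand side above. The two terms collapse exactly, giving zero. This shows $D_{\psi}[\beta_0:\beta_{\alpha^*}]=D_{\psi}[\beta_1:\beta_{\alpha^*}]$. Finally, applying \eqref{eq:divergences} (or equivalently \eqref{eq:breg_kl}) to each side rewrites this as $D_{\KL}[\pi_{\beta_{\alpha^*}}\|\pi_{\beta_0}]=D_{\KL}[\pi_{\beta_{\alpha^*}}\|\pi_{\beta_1}]$, completing the proof.

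There is essentially no obstacle here: the result is a one-line consequence of the optimality condition already derived, combined with the Bregman–KL identity established earlier in the paper. The only care needed is to be consistent about the orientation of the Bregman divergence versus the KL divergence (the arguments swap under \eqref{eq:divergences}), so I would state this conversion explicitly to avoid any confusion at the final step.
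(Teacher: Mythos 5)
Your proof is correct and follows essentially the same route as the paper: expand both Bregman divergences using $\nabla\psi(\beta_{\alpha^*})=\eta_{\alpha^*}$, invoke the stationarity condition $\eta_{\alpha^*}=\frac{\psi(\beta_1)-\psi(\beta_0)}{\beta_1-\beta_0}$, and convert to KL via \eqref{eq:divergences}. The only difference is cosmetic: the paper simplifies each divergence separately to the common value $\frac{\beta_1-\beta_{\alpha}}{\beta_1-\beta_0}\psi(\beta_0)+\frac{\beta_{\alpha}-\beta_0}{\beta_1-\beta_0}\psi(\beta_1)-\psi(\beta_{\alpha})$ (thereby also identifying it as a Jensen gap and scaled R\'enyi divergence), whereas you subtract the two and show the difference vanishes, which proves the stated equality just as well.
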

\begin{proof} 
See App. \ref{app:equal_kl} for Bregman divergence derivations.   See discussion around \eqref{eq:divergences} for the relationship between the Bregman and KL divergence. 
%that at this maximum, $\mathcal{J}_{\psi, \alpha^{*}} = D_{\psi}[\beta_0 : \beta_{\alpha}^{*} ] = D_{\psi}[\beta_1 : \beta_{\alpha}^{*} ]$, so that the divergences are equal.  In particular, we have
\end{proof}
% \begin{align*}
%      D_{KL}[\pi_{\beta_{{\alpha}^{*}}} : \pi_{\beta_0}] = D_{KL}[\pi_{\beta_{{\alpha}^{*}}} : \pi_{\beta_1}] 
% \end{align*}

\subsection{R\'enyi Divergence as a Jensen Gap}\label{app:renyi_jg}
%\subsection{R\'enyi Divergence as a Jensen Gap}\label{app:renyi}
We consider the R\'enyi $\alpha$ divergence between any two distributions $\pi_{\beta_1}$ and $\pi_{\beta_0}$ in our exponential family, so that $\pi_{\beta}(\vz|\vx) = \pi_0(\vz)^{1-\beta} \pi_1(\vz)^{\beta} / Z_{\beta}(\vx)$.  Noting that the scaling factor $\alpha - 1 \leq 0$, we proceed to show that the scaled divergence is equal to a gap in Jensen's inequality:
\begin{align}
    (1-\alpha) & D_{\alpha}[\pi_{\beta_1}(\vz) : \pi_{\beta_0}(\vz)] \nonumber \\
    &=  (1-\alpha) \frac{1}{\alpha -1} \log \int \pi_{\beta_0}^{1-\alpha} \pi_{\beta_1}^{\alpha} d\vz \nonumber \\
    &= - \log \int \big(\frac{\pi_{0}^{1-\beta_0} \pi_{1}^{\beta_0}}{Z_{\beta_0}}\big)^{1-\alpha} \big(\frac{\pi_{0}^{1-\beta_1} \pi_{1}^{\beta_1}}{Z_{\beta_1}}\big)^{\alpha} d\vz \nonumber \\
    &= - \bigg( \log \int \pi_{0}^{1- \beta_0 - \alpha + \alpha \beta_0  + \alpha - \alpha \beta_1} \pi_{1}^{\beta_0 - \alpha \beta_0 + \alpha \beta_1} d\vz - \big((1-\alpha) \log Z_{\beta_0} + \alpha \log Z_{\beta_1} \big) \bigg) \nonumber \\
    &=  - \bigg( \log \int \pi_{0}^{1 - [(1- \alpha) \beta_0 + \alpha \beta_1]} \pi_{1}^{(1- \alpha) \beta_0 + \alpha \beta_1} d\vz  - \big((1-\alpha) \log Z_{\beta_0} + \alpha \log Z_{\beta_1} \big) \bigg) \nonumber \\
    &=  (1-\alpha) \psi(\beta_0) + \alpha \psi(\beta_1)  - \psi \big((1-\alpha)\beta_0 + \alpha \beta_1 \big) \nonumber \\ %\label{eq:R\'enyi} \\
    &=  \mathcal{J}_{\alpha, \psi} \nonumber
\end{align}

\subsection{Chernoff Point and Maximizing the Jensen Gap}\label{app:max_beta}
In this section, we derive the optimal solution for the optimization defining the Chernoff information point.  In particular, we optimize the Bregman Information or Jensen gap $\mathcal{J}$
\begin{align}
    \mathcal{J}_{\psi, \alpha} &= \max \limits_{\alpha} 
    \, \min \limits_{\beta_r} \, \, (1-\alpha) \,  D_{\psi}[\, \beta_0 :  \beta_r ] + \, \alpha \, D_{\psi}[ \, \beta_1: \beta_r]\\
    &=\max \limits_{\alpha}  \, \,  (1-\alpha) \, \psi(\beta_0) + \, \alpha \, \psi(\beta_1) - \, \psi((1-\alpha) \beta_0 + \alpha \beta_1) 
\end{align}
 where we use arbitrary endpoints $\beta_0, \beta_1$ and mixing parameter $\alpha$ to highlight the arithmetic mean in the argument of the final term.  This will match \eqref{eq:jgc} when $\beta_0 = 0$, $\beta_1 = 1 $ and $t = \beta$.

Now, we can differentiate with respect to $\alpha$, letting  $\beta_{\alpha} = (1-\alpha) \beta_0 + \alpha \beta_1$.   We use the product rule and the identity $d \psi(\beta) / d\beta = \eta_{\beta}$  in the last term to obtain
\begin{align}
    \frac{d\mathcal{{J}}}{dt} = 0 &= -\psi(\beta_0) + \psi(\beta_1) - \eta_{\beta_\alpha} \cdot ( \beta_1 - \beta_0 ) \\
        \implies \quad \eta_{\beta_{\alpha}} &= \frac{\psi(\beta_1)-\psi(\beta_0) }{ \beta_1 - \beta_0}
\end{align}
where $\eta_{\beta_{\alpha}}$ indicates the expected sufficient statistics, or dual parameter, corresponding to the natural parameter $\beta_{\alpha} = (1-\alpha) \beta_0 + \alpha \beta_1$.
% When $\beta_0 = 0$ and $\beta_1 = 1$, we have $\beta_t = (1-t) \cdot 0 + t \cdot 1 = t$ so that we can treat $\beta$ as the mixing parameter.   With $\psi(1) = \log \px$ and $\psi(0) = 0$, we have
% %, so that the \textsc{lhs} specifies a condition on the moment parameter associated with a nat
% \begin{align}
%     \eta_{\beta_t} = \frac{\log \px - 0 }{1-0} = \log \px
% \end{align}

\subsection{Equal KL Divergences Derivation}\label{app:equal_kl}
We show that the KL divergences that constitute $\mathcal{J}_{\alpha, \psi}$ are equal at the critical point $\eta_{\alpha} = \frac{\psi(\beta_1)-\psi(\beta_0)}{\beta_1-\beta_0}$:

\begin{align*}
D_{\psi}[ \beta_0 : &\beta_{\alpha}] = \psi(\beta_0) - \psi(\beta_{\alpha}) - (\beta_0 - \beta_{\alpha}) \eta_{\alpha} \\
&\phantom{\beta_{\alpha}]} = \psi(\beta_0) - \psi(\beta_{\alpha}) + \frac{(\beta_{\alpha} - \beta_{0})}{\beta_1-\beta_0} (\psi(\beta_1) - \psi(\beta_0)) \\
& = \small \frac{1}{\beta_1 - \beta_0}\bigg( (\beta_1 - \beta_0) \psi(\beta_0) - (\beta_1 - \beta_0) \psi(\beta_{\alpha}) + 
(\beta_{\alpha} - \beta_{0}) \psi(\beta_1) - (\beta_{\alpha} - \beta_{0}) \psi(\beta_0) \bigg)\\
& = \frac{1}{\beta_1 - \beta_0}\bigg( (\beta_1 - \beta_{\alpha}) \psi(\beta_0) + 
(\beta_{\alpha} - \beta_{0}) \psi(\beta_1) - (\beta_1 - \beta_0) \psi(\beta_{\alpha})  \bigg)  \\
  & = \bigg( \frac{\beta_{1} - \beta_{\alpha}}{\beta_1 - \beta_0}  \psi(\beta_0) + \frac{\beta_{\alpha} - \beta_{0}}{\beta_1 - \beta_0} \psi(\beta_1) - \psi(\beta_{\alpha}) \bigg) 
  \end{align*}
\begin{align*}
D_{\psi}[ \beta_1 : &\beta_{\alpha}] = \psi(\beta_1) - \psi(\beta_{\alpha}) - (\beta_1 - \beta_{\alpha}) \eta_{\alpha} \\
&\phantom{\beta_{\alpha}]} = \psi(\beta_1) - \psi(\beta_{\alpha}) - \frac{(\beta_{1} - \beta_{\alpha})}{\beta_1-\beta_0} (\psi(\beta_1) - \psi(\beta_0)) \\
&= \small \frac{1}{\beta_1 - \beta_0}\bigg( (\beta_1 - \beta_0) \psi(\beta_1) - (\beta_1 - \beta_0) \psi(\beta_{\alpha}) - 
(\beta_{1} - \beta_{\alpha}) \psi(\beta_1) + (\beta_{1} - \beta_{\alpha}) \psi(\beta_0) \big)\\
&= \frac{1}{\beta_1 - \beta_0}\bigg( (\beta_{1} - \beta_{\alpha}) \psi(\beta_0) + (\beta_{\alpha} - \beta_{0})
 \psi(\beta_1) - (\beta_1 - \beta_0) \psi(\beta_{\alpha}) \bigg) \\
   &= \bigg( \frac{\beta_{1} - \beta_{\alpha}}{\beta_1 - \beta_0}  \psi(\beta_0) + \frac{\beta_{\alpha} - \beta_{0}}{\beta_1 - \beta_0} \psi(\beta_1) - \psi(\beta_{\alpha}) \bigg)
% &= \frac{1}{\beta_1 - \beta_0}\bigg( (\beta_1 - \beta_{\alpha}) \psi(\beta_0) + 
% (\beta_{\alpha} - \beta_{0}) \psi(\beta_1) - (\beta_1 - \beta_0) \psi(\beta_{\alpha})  \bigg)
  \end{align*}
  
  We have shown that the two divergences are equal when our condition on $\eta_{\alpha}$ holds.   Further, observe that each divergence amounts to a Jensen gap $\mathcal{J}_{\alpha, \psi}$ with $\alpha = \frac{\beta_{\alpha} - \beta_{0}}{\beta_1 - \beta_0}$:
%   \begin{align*}
%   D_{\psi}[ \beta_0 : \beta_{\alpha}] &= D_{\psi}[ \beta_1 : \beta_{\alpha}] \\ 
%   &= \bigg( \frac{\beta_{1} - \beta_{\alpha}}{\beta_1 - \beta_0}  \psi(\beta_0) + \frac{\beta_{\alpha} - \beta_{0}}{\beta_1 - \beta_0} \psi(\beta_1) - \psi(\beta_{\alpha}) \bigg)
%   \end{align*}
 This is more apparent for $\beta_0 = 0$ and $\beta_1 = 1$, where this simplifies using $\alpha = \frac{\beta_{\alpha} - \beta_{0}}{\beta_1 - \beta_0} = \beta_{\alpha}$
  \begin{align*}
  D_{\psi}[ \beta_0 : \beta_{\alpha}] = D_{\psi}[ \beta_1 : \beta_{\alpha}] &= \frac{(1 - \beta_{\alpha})}{1-0} \psi(0) + \frac{\beta_{\alpha}-0}{1-0} \psi(1) - \psi(\beta_{\alpha}) \\[1.25ex]
%   &= (1 - \beta_{\alpha}) \cdot 0 +  \beta_{\alpha} \log p(\vx) \\
%   &\phantom{=} - \beta_{\alpha} \log p(\vx) + (1-\beta_{\alpha}) D_{\beta_{\alpha}}[  \pi_1(\vz|\vx) :  \pi_0(\vz|\vx)] \\[1.5ex]
  &= (1-\beta_{\alpha}) D_{\beta_{\alpha}}[ \pi_1(\vz|\vx) : \pi_0(\vz|\vx)]  \, , %\beta_{\alpha}  \psi(1) - \psi(\beta_{\alpha})
  \end{align*}
  where, in the last line, we use the fact that the scaled Rényi divergence is a Jensen gap from App. \ref{app:renyi_jg}.
  %although we often only have access to the unnormalized density $\tilde{\pi_1}(\vz|\vx)$  (e.g.$\pi_1 (\vz|\vx) = p(\vz|\vx)$), and must deal with the $\beta_{\alpha} \log p(\vx)$ terms in some way.

\subsection{Dual Jensen Gap using \texorpdfstring{$\psi^{*}(\eta)$}{ the Conjugate Function} }
Note that we could also construct a Jensen gap from the dual divergence $\psi^{*}(\eta_{\beta}) = D_{KL}[\pi_{\beta} || \pi_0]$, with $D_{\psi^{*}}[\eta: \eta^{\prime}] = D_{KL}[\pi_{\eta} || \pi_{\eta^{\prime}} ]$
%, with $\mathcal{J}_{\psi^{*}, 1-\lambda} &= over $\eta_0$ and $\eta_1$ with mixing parameter $1-\lambda$ used for later convenience.
% \begin{align}
%   \mathcal{J}_{\psi^{*}, 1-\lambda} &= \, \min \limits_{\eta^{\prime}}  \, \lambda D_{\psi^{*}}[\eta_0 : \eta^{\prime} ] + (1-\lambda) D_{\psi^{*}}[\eta_1: \beta^{\prime}] \label{eq:jg_min_dual} \\
%   \eta_{\lambda}^{*} &= \, \lambda \, \eta_0 + (1-\lambda) \, \eta_1 \nonumber \\
%   \end{align}\citet{nielsen2019jensen}
%   At this optimum, we obtain a gap in Jensen's inequality
  \begin{align}
  \mathcal{J}_{\psi^{*}, \lambda}   &= \lambda \, D_{KL}[\pi_{\eta_0} : \pi_{\mu_\eta^{(\lambda)}} ] + (1-\lambda) D_{KL}[\pi_{\eta_1} : \pi_{\mu_\eta^{(\lambda)}}] \label{eq:kl_min_dual} \\
  &=  \lambda \, D_{\psi^{*}}[\eta_0 : \mu_\eta^{(\lambda)}] + (1-\lambda) D_{\psi^{*}}[\eta_1: \mu_\eta^{(\lambda)}] \label{eq:jg_min_dual} \\
  &=  \, \lambda \, \psi^{*}(\eta_0) + (1-\lambda) \, \psi^{*}(\eta_1) - \psi^{*} \big(\lambda \, \eta_0 + (1-\lambda)  \, \eta_1 \big) \label{eq:jg_dual}
\end{align}
where $\mu_\eta^{(\lambda)} = \, \lambda \, \eta_0 + (1-\lambda) \, \eta_1$.  This matches the geometric Jensen-Shannon divergence of \citet{nielsen2019jensen} or \cite{deasy2020constraining}, whereas $\mathcal{J}_{\psi, \alpha}$ in \eqref{eq:jg_primal} is referred to as the dual version.
Taking the maximum over $\lambda$,
\begin{align*}
\lambda^{*} &= \argmax \limits_{\lambda} \mathcal{J}_{\psi^{*}, \lambda} \quad \\ %\implies \quad 
&= \frac{\psi^{*}(\eta_1) - \psi^{*}(\eta_0)}{\eta_1 - \eta_0}
\end{align*}
When $\pi_{\eta_0} = \pi_0$ corresponds to the base distribution, $\psi^{*}(\eta_0) = D_{KL}[\pi_{\beta_{\eta_0}} || \pi_0] = 0$.
\begin{align*}
    \lambda^{*} = \frac{D_{KL}[\pi_1 || \pi_0]}{D_{KL}[\pi_1 || \pi_0] + D_{KL}[\pi_0 || \pi_1] }
\end{align*}
% However, in more generality with $\pi_{\eta_0} \neq \pi_0$ note that $\eta_1 - \eta_0 = \frac{1}{\beta_1 - \beta_0} \big ( D_{KL}[\pi_{\eta_1} || \pi_{\eta_0}] + D_{KL}[\pi_{\eta_1} || \pi_{\eta_0}] \big) $ \cite{brekelmans2019tvo}.  
% \begin{align*}
%     \lambda^{*} &= \frac{\psi^{*}(\eta_1) - \psi^{*}(\eta_0)}{\frac{1}{\beta_1 - \beta_0} \big ( D_{KL}[\pi_{\eta_1} || \pi_{\eta_0}] + D_{KL}[\pi_{\eta_1} || \pi_{\eta_0}] \big)} \\
%     &= \frac{\big(\psi^{*}(\eta_1) - \psi^{*}(\eta_0)\big) \big( \beta_1 - \beta_0 \big) }{D_{KL}[\pi_{\eta_1} || \pi_{\eta_0}] + D_{KL}[\pi_{\eta_1} || \pi_{\eta_0}] }
% \end{align*}

% We can evaluate the condition of $\eta_{\lambda^{*}}$ using this $\argmax$:
% \begin{align*}
%     \eta_{\lambda}^{*} &= \lambda \eta_0 + (1-\lambda) \eta_1 \\
%     &= \frac{\psi^{*}(\eta_1) - \psi^{*}(\eta_0)}{\eta_1 - \eta_0} \cdot \eta_0 + (1- \frac{\psi^{*}(\eta_1) - \psi^{*}(\eta_0)}{\eta_1 - \eta_0}) \cdot \eta_1 \\ 
%     &= \eta_1 - \frac{\big( \eta_1-\eta_0 \big)\big(\psi^{*}(\eta_1) - \psi^{*}(\eta_0)\big) }{\eta_1-\eta_0} \\
%     &= \eta_1 - \psi^{*}(\eta_1) + \psi^{*}(\eta_0)
% \end{align*}

% At this maximum, the Jensen gap takes the value of
% \begin{align*}
%  \mathcal{J}_{\psi^{*}, \lambda}  = \, \, (\frac{\psi^{*}(\eta_1) - \psi^{*}(\eta_0)}{\eta_1 - \eta_0} ) \, \psi^*(\eta_0) + (1-\frac{\psi^{*}(\eta_1) - \psi^{*}(\eta_0)}{\eta_1 - \eta_0}) \psi^*(\eta_1) - \psi^{*}(\eta_{\lambda^{*}})
% \end{align*}

% \input{sections/app/appendix}

%\clearpage
% \input{sections/app/displaced_eq}
%\input{sections/app/noneq}

\end{document}